\documentclass{valence} %

\usepackage[utf8]{inputenc} %
\usepackage[T1]{fontenc}    %
\usepackage{csquotes}
\usepackage{hyperref}       %
\usepackage{url}            %
\usepackage{booktabs}       %
\usepackage{amsfonts}       %
\usepackage{nicefrac}       %
\usepackage{microtype}      %

\usepackage{multirow}
\usepackage{makecell}
\usepackage[table]{xcolor}
\usepackage{adjustbox}
\usepackage{array}
\usepackage{pifont}

\usepackage{amsmath}
\usepackage{amssymb}
\usepackage{mathtools}
\usepackage{amsthm}
\usepackage{thmtools}
\usepackage{dsfont}
\usepackage{algorithm}
\usepackage{algpseudocode}

\makeatletter
\newcounter{algorithmicH}
\let\oldalgorithmic\algorithmic
\renewcommand{\algorithmic}{%
  \stepcounter{algorithmicH}%
  \oldalgorithmic}%
\renewcommand{\theHALG@line}{ALG@line.\thealgorithmicH.\arabic{ALG@line}}
\makeatother

\usepackage{subcaption}

\usepackage[capitalize,noabbrev]{cleveref}
\crefname{appendix}{Appendix}{Appendices}
\Crefname{appendix}{Appendix}{Appendices}

\usepackage[disable]{todonotes}

\definecolor{headerblue}{RGB}{230,238,250}
\definecolor{subheaderblue}{RGB}{242,246,252}
\definecolor{lightgrayrow}{RGB}{248,249,250}

\newcommand{\best}[1]{\textbf{#1}}
\newcommand{\second}[1]{\underline{#1}}

\declaretheorem[numberwithin=section]{theorem}
\declaretheorem[sibling=theorem]{proposition,lemma,corollary}
\declaretheorem[sibling=theorem,style=definition]{definition,assumption}

\DeclareMathOperator{\E}{\mathbb{E}}
\DeclareMathOperator{\ent}{H}
\DeclareMathOperator{\bin}{bin}
\DeclareMathOperator{\bigO}{\mathcal{O}}
\DeclareMathOperator{\indic}{\mathds{1}}
\DeclarePairedDelimiter{\abs}{\lvert}{\rvert}

\DeclarePairedDelimiter{\ceil}{\lceil}{\rceil}
\newcommand{\eps}{\varepsilon}
\newcommand{\R}{\mathbb R}
\newcommand{\tp}{^{\mathsf{T}}}

\DeclareMathOperator*{\argmax}{argmax}

\newcommand{\methodname}{\textsc{SeedER}}

\usepackage{tcolorbox}

\tcbset{
  theoremstyle/.style={
    colback=headerblue,
    colframe=darkblue,
    coltitle=white,
    fonttitle=\bfseries\sffamily,
    before title={\colorlet{darkblue}{white}},
    before upper={\colorlet{darkblue}{darkblue}},
    boxrule=0.8pt,
    arc=4pt,
    left=6pt,
    right=6pt,
    top=6pt,
    bottom=6pt
  }
}

\title{\textsc{SeedER}: \textbf{Seed}-and-\textbf{E}xpand \textbf{R}etrieval\\from Knowledge Graphs}

\author[1,2,3\star]{Hamed Shirzad}
\author[1,2]{Frederik Wenkel}
\author[1,2]{Dominique Beaini}
\author[3\dagger]{Danica J. Sutherland}
\author[1,2\dagger]{Emmanuel Noutahi}

\affiliation[1]{Valence Labs, Montréal, QC, Canada}
\affiliation[2]{Recursion, Salt Lake City, UT, USA}
\affiliation[3]{University of British Columbia, Department of Computer Science, Vancouver, BC, Canada}

\contribution[\star]{Work done during an internship at Valence Labs}
\contribution[\dagger]{These authors jointly supervised this work}

\abstract{
  Knowledge graphs (KGs) offer a rich representation for relational knowledge, but their irregular structure makes retrieval challenging: ego-graph expansion grows rapidly, and dense embedding methods struggle with multi-hop compositional queries. Existing agent-based graph exploration approaches, while expressive, are often too expensive for large-scale retrieval. We introduce \methodname~(Seed-and-Expand Retrieval), a retrieval framework that explicitly leverages KG structure through iterative, low-cost expansion. \methodname{} first seeds a compact set of core nodes using lightweight dense and entity-based retrieval, then selectively expands this set via a learned graph-aware policy trained with reinforcement learning. This design decomposes global reasoning into reusable local decisions, enabling efficient discovery of query-relevant nodes while tightly controlling expansion cost. We show theoretical limitations of dense retrieval on compositional graph queries, and establish advantages of \methodname{} from both compositional generalization and graph-constrained submodular optimization perspectives. Empirically, \methodname{} substantially improves recall with compact candidate sets over strong dense and graph-augmented baselines, making it an effective first-stage retriever for knowledge-intensive reasoning systems.
}

\begin{document}

\maketitle

\section{Introduction}

Knowledge graphs (KGs) are a central substrate for storing and reasoning over structured knowledge in domains such as biomedicine, scientific discovery, and enterprise search.
Nodes represent entities with textual descriptions, while typed edges encode relations that support complex, multi-hop reasoning. Despite this expressivity, KGs pose a fundamental challenge for retrieval: relevant answers to a query are often not lexically similar to the query itself, but are instead reachable only through a sequence of relational constraints. 
If we ask which drugs treat Alzheimer's disease via the cholinergic pathway, answer nodes such as \emph{Donepezil} or \emph{Galantamine} share no terms with the query, but are uniquely identified by the relational path connecting them. Effective retrieval over KGs requires reasoning \emph{over graph structure}, not merely over text \citep{stark,primekg}.

Most existing retrieval systems adopt dense embedding-based approaches that score nodes independently by similarity to the query \citep{lewis2020retrieval,karpukhin2020dense}. While effective for some queries, such methods struggle on compositional graph queries that require chaining multiple relations: the prior example is \emph{Alzheimer's} (\textsc{disease}) $\to$ \emph{ACHE} (\textsc{gene}) $\to$ \emph{cholinergic signaling} (\textsc{pathway}) $\to$ \emph{Donepezil} (\textsc{drug}).
Augmenting dense retrieval with $k$-hop graph expansion helps, but suffers from rapid candidate-set explosion and introduces substantial noise. GNN-based retrievers~\citep{gretriever} encode local structure into node embeddings but still score nodes through a single embedding comparison, inheriting the same representational bottleneck. At the other extreme, agent-based graph exploration methods~\citep{graphflow,graph_r1} explicitly traverse the KG, but operate sequentially over individual nodes, making them prohibitively expensive for large-scale, first-stage retrieval.

We argue that the core difficulty of KG retrieval lies in a mismatch between \emph{global} embedding-based scoring and the \emph{local} structure of graph reasoning. Compositional queries are naturally resolved by a sequence of local decisions -- selecting which neighboring nodes to explore next -- yet dense retrievers attempt to approximate the entire composition in a single embedding space. We show that this mismatch is not merely empirical, but fundamental: there exist reasonable families of KGs and queries for which dense retrieval requires feature embeddings of size linear in the size of the graph.

Motivated by this observation, we introduce \textbf{\methodname}~(\textbf{Seed}-and-\textbf{E}xpand \textbf{R}etrieval), a retrieval framework that decomposes graph reasoning into an iterative expansion process. \methodname{} first identifies a small set of core nodes, using inexpensive dense and entity-based retrieval. It then selectively expands the candidate set by repeatedly choosing promising nodes from the one-hop frontier. These expansion decisions are made by a learned, graph-aware policy that conditions on both the query and the induced subgraph, and is trained using reinforcement learning to directly optimize retrieval metrics under a strict expansion budget.

\methodname{} avoids the exponential blow-up of naive graph expansion by learning \emph{where} to expand, rather than expanding uniformly. By decomposing reasoning into local steps, \methodname{} also naturally supports \emph{compositional generalization}: once local relation patterns are learned, they can be reused across unseen compositions.
It also builds beyond greedy strategies by learning a policy that can adapt to delayed rewards.

Our theoretical analysis establishes a sharp separation between dense retrieval and iterative graph expansion for compositional queries,
and shows the necessity of a learned policy.
Empirically, \methodname{} achieves better retrieval with compact candidate sets compared to dense and graph-augmented baselines on STaRK-Prime, STaRK-MAG, and STaRK-Amazon, making it a strong and efficient first-stage retriever for knowledge-intensive reasoning pipelines.

\begin{figure}[t]
    \centering
    \includegraphics[width=\linewidth]{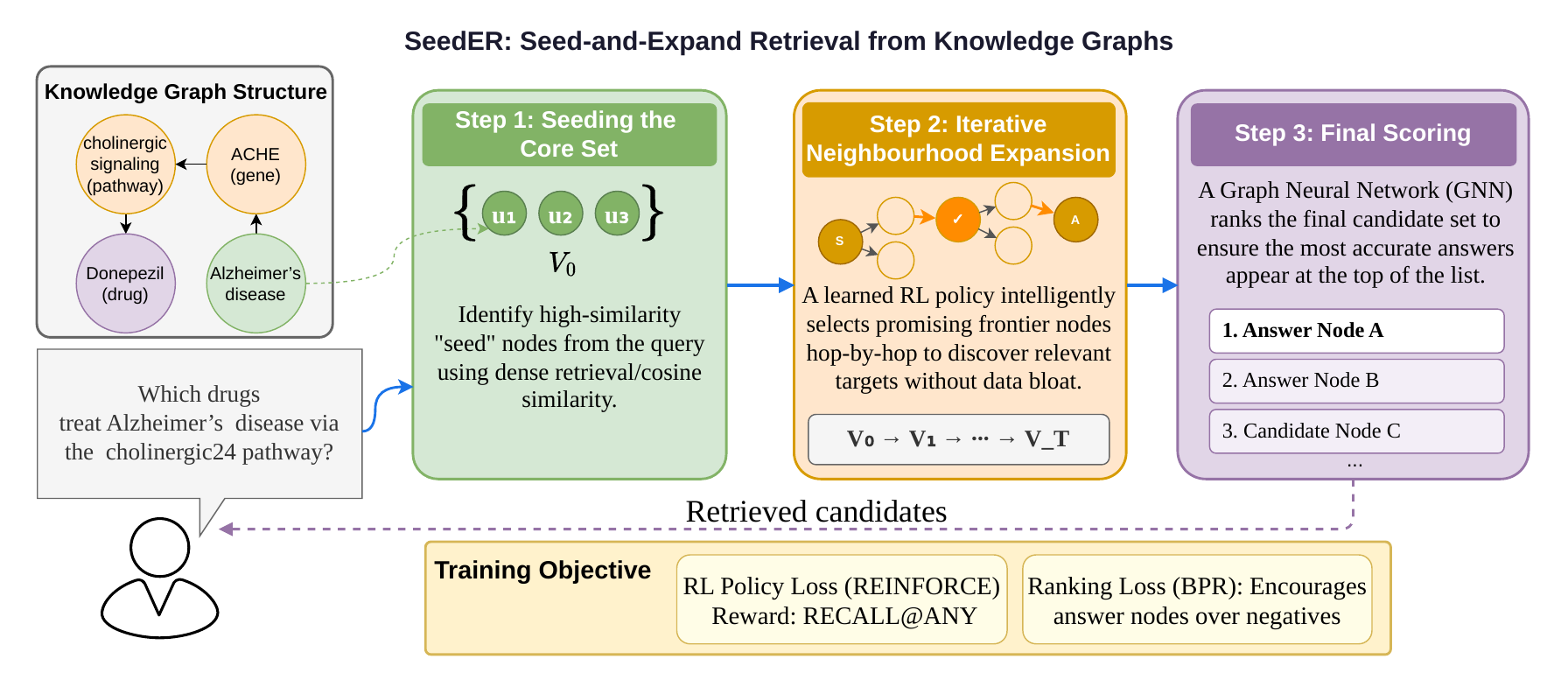}
    \caption{
    Overview of the \methodname{} pipeline. Given a query, \methodname{} first retrieves a compact set of core nodes, then learns to selectively expand their multi-hop neighborhood using an RL-guided graph policy. The final candidate nodes are ranked with a GNN-based scoring head.
    }
    \label{fig:seeder_diagram}
    \vspace{-0.1cm}
\end{figure}

\section{Background and Problem Setting}
\label{sec:preliminaries}

We consider a typed knowledge graph $G=(V,E)$, where $V$ is a set of nodes and
$E \subseteq V \times \mathcal{R} \times V$ is a set of directed edges labeled by relation types
$\mathcal{R}$.
Each node $v\in V$ is associated with a textual description $x_v$.
For example, the precision-medicine knowledge graph PrimeKG~\citep{primekg} contains roughly $|V|=129\text{K}$ nodes of ten types, including \textsc{disease}, \textsc{drug}, \textsc{gene} and \textsc{pathway}, together with $8.1\text{M}$ directed typed relations. These relations include biomedical links such as \textsc{associations} between \textsc{disease} and \textsc{gene}, \textsc{indications} between \textsc{drug} and \textsc{disease}, and \textsc{memberships} between \textsc{gene} and \textsc{pathway}.

Let $\mathcal{Q}=\{q_1,\dots,q_{|\mathcal{Q}|}\}$ denote a set of evaluation queries;
for instance, our running example is \emph{``Which drugs target the cholinergic pathway implicated in Alzheimer's disease?''}
Each query $q\in\mathcal{Q}$ is associated with a %
ground-truth set
$A(q)\subseteq V$ of relevant nodes in the KG.  For the example query, $A(q)$ contains acetylcholinesterase inhibitors such as \emph{Donepezil}, \emph{Galantamine}, and \emph{Rivastigmine}. 

A \emph{retrieval model} returns an ordered list
$R(q) = [v_1,\dots,v_k]$
with $k \ll \abs V$.
One goal could be for $R(q) = A(q)$,
i.e.\ fully answering the query with the retrieval model.
This is a difficult task, and usually not the goal of knowledge graph interaction.
Instead, we want systems which can use the content of knowledge graphs as input for their own reasoning processes,
such as in retrieval-augmented generation,
where the goal is to find a small $R(q) \supseteq A(q)$ to give useful context to a textual generative model.
G-Retriever studies retrieval-augmented generation for textual graphs, retrieving task-relevant subgraphs for downstream question answering \citep{gretriever}; GraphRAG-style systems likewise use graph structure to assemble context \citep{graphrag}. More agentic approaches such as Graph-R1 and GraphFlow formulate graph exploration as a sequential reasoning problem, with reinforcement learning or LLM-guided control \citep{graph_r1,graphflow}.

Another paradigm is to design lower-budget \emph{first-stage retrievers} whose output $R(q)$, with $\abs{R(q)} \ll \abs{V}$ but larger than the downstream model's context budget, is filtered by a stronger but more expensive \emph{reranker} before use. Empirically, two-stage retrieve-then-rerank pipelines are strong on standard retrieval benchmarks, with consistent gains in Hit@$k$ and MRR over first-stage retrieval alone~\citep{stark,beir,nogueira-etal-2020-document}. However, reranking is constrained by the first stage: any answer in $A(q) \setminus R(q)$ is unrecoverable, and excessive false positives in $R(q)$ also degrade reranker quality~\citep{dang2013twostage,zamani2022stochastic}. The first-stage retriever must therefore produce a compact, high-coverage candidate set.

The most common approach for this first stage is \emph{dense retrieval} \citep{karpukhin2020dense,reimers-2019-sentence-bert,reimers-2020-multilingual-sentence-bert,thakur-2020-AugSBERT,reimers-2020-Curse_Dense_Retrieval,zhang2025qwen3};
for instance, letting $R(q)$ take the subset of nodes with highest embedding similarity to the query.
This is an effective baseline for retrieval in many tasks,
but it makes certain forms of multi-hop reasoning far more difficult:
anything not seemingly directly related to the query will be missed,
and then simply not be available to later stages of the process.

Our aim is a \emph{structure-based} first-stage retriever,
that can exploit the graph structure directly
to do reasoning
but at a much ``lighter weight'' than full retrieval-augmented or agentic exploration of the graph by the reasoning model.
\methodname{} uses a lighter graph neural network structure, trained through reinforcement learning,
to iteratively decide which nodes to include in $R(q)$.

\paragraph{Graphs as inputs to language models.}
A related line of work studies how to encode structured data so that language models can better consume graph context.
\citet{talklikeagraph,letyourgraphdothetalking} show that
graph-aware serialization and prompting can improve understanding by LLMs.
These methods are complementary: they improve how a model reasons \emph{after} graph evidence has been presented, whereas our focus is on how to \emph{retrieve} that evidence efficiently from a large graph.

\paragraph{Graph-structured retrieval-augmented generation.}
A closer line of work uses graph structure during retrieval and context construction for retrieval-augmented generation.
GraphRAG builds graph-based indexes over document collections and uses community structure for query-focused summarization \citep{graphrag}.
G-Retriever retrieves task-relevant textual subgraphs for graph question answering, while GRAG retrieves textual subgraphs and combines textual and topological views for generation \citep{gretriever,grag}.
HippoRAG constructs a knowledge-graph memory and uses Personalized PageRank for associative retrieval; LightRAG combines graph-structured indexing with vector retrieval through low- and high-level retrieval; KG$^2$RAG expands and organizes semantically retrieved seed chunks using KG relations; and GFM-RAG trains a graph foundation model for retrieval over graph-indexed corpora \citep{hipporag,lightrag,kg2rag,gfmrag}.
Related methods further study memory-augmented graph retrieval, path-planning and path-pruning over KGs, reinforcement-learning-based agentic GraphRAG, and graph-based reranking between the retriever and reader \citep{hipporag2,rog,pathrag,graphrag_r1,graph_r1,grag_reranking,graphrag_survey}.

These systems demonstrate the value of graph structure for RAG, but they are generally designed as end-to-end generation, reasoning, or reranking pipelines.
In contrast, \methodname{} targets the lower-budget first-stage retrieval problem: it learns a query-conditioned graph expansion policy whose output is a compact, high-recall candidate set for downstream reranking or generation.

\paragraph{Graph encoders and learned graph selection.}
\methodname{} also builds on structure-aware graph representation learning. Classic GNNs such as GCN, GAT, and GIN aggregate information from local neighborhoods \citep{gcn,gat,gatv2,gin}, while graph transformers such as SAN, GraphGPS, Exphormer, and Spexphormer improve long-range modeling and scalability \citep{san,graphormer,graphgps,exphormer}. These models give the expressive backbones on which we will build. 

A common strategy for improving the scalability of GNNs is to sample or sparsify the neighborhood instead of aggregating over the full local graph. Early methods such as GraphSAGE \citep{graphsage}, PinSAGE \citep{pinsage}, and GraphSAINT \citep{graphsaint} rely on fixed, non-learned neighborhood sampling schemes. More recent approaches introduce learned or adaptive mechanisms for selecting informative edges or neighbors, including Spexphormer \citep{spexphormer} and GRAPES \citep{grapes}. GRAPES is closest in spirit to our work, as it uses reinforcement learning to sample informative neighbors for scalable GNN training. However, our use of reinforcement learning is fundamentally different: our policy is query-conditioned, operates over frontier nodes in an iterative retrieval loop, and is optimized for delayed retrieval reward rather than for generic message-passing efficiency.

\section{Local graph exploration}
\label{sec:challenges}

\paragraph{A theoretical hardness result}

Dense bi-encoders embed queries and nodes into a shared vector space and perform retrieval via inner products. While highly effective for standard document retrieval, this paradigm struggles to capture relational structure in knowledge graphs, where relevance is often determined not by local textual similarity, but by multi-hop relational dependencies between nodes.

A natural approach is to enrich node representations with structural information. For example, \citet{stark} augment node text with descriptions of incident relations. While this improves over purely single-document information retrieval, such representations remain fundamentally local: they cannot faithfully encode multi-hop dependencies beyond immediate neighborhoods. Increasing textual context can also degrade embedding quality, as longer inputs introduce noise and reduce the effectiveness of similarity-based retrieval.
This can also require expensive recomputation of embeddings for large portions of a graph from even small changes in the KG.

More sophisticated alternatives -- incorporating neighborhood summaries, multiple embeddings per node, or structure-aware embeddings from graph neural networks -- partially address these issues. But these approaches share a fundamental limitation: any method that decides relevance solely based on a fixed pair of query and node embeddings is subject to a severe capacity bottleneck.

\begin{tcolorbox}[theoremstyle,title={Informal version of \cref{thm:hard-decisions,thm:iterative-tracing}: Hardness of dense retrieval}]
For a particular form of a relation-tracing graph,
any dense retrieval method based on fixed query and node embeddings
requires embedding size $\Omega( \abs V )$ to answer correctly.

On the same problem,
there is a simple iterative policy
(which can be implemented based on a linear classifier)
that needs only embedding size $\bigO(\log \abs V)$.
\end{tcolorbox}

The formal statement and proof are in \cref{app:dense-lb},
where we construct a knowledge graph with queries corresponding to sequences of typed relations.
From results related to classic properties of expander graphs \citep{expander-survey},
being able to resolve queries based on a single node embedding
requires that embedding to contain nearly all information about the graph.
Dense embeddings are simply the wrong tool for solving compositional, multi-hop reasoning.

\begin{figure}[t]
    \centering
    \includegraphics[width=\linewidth]{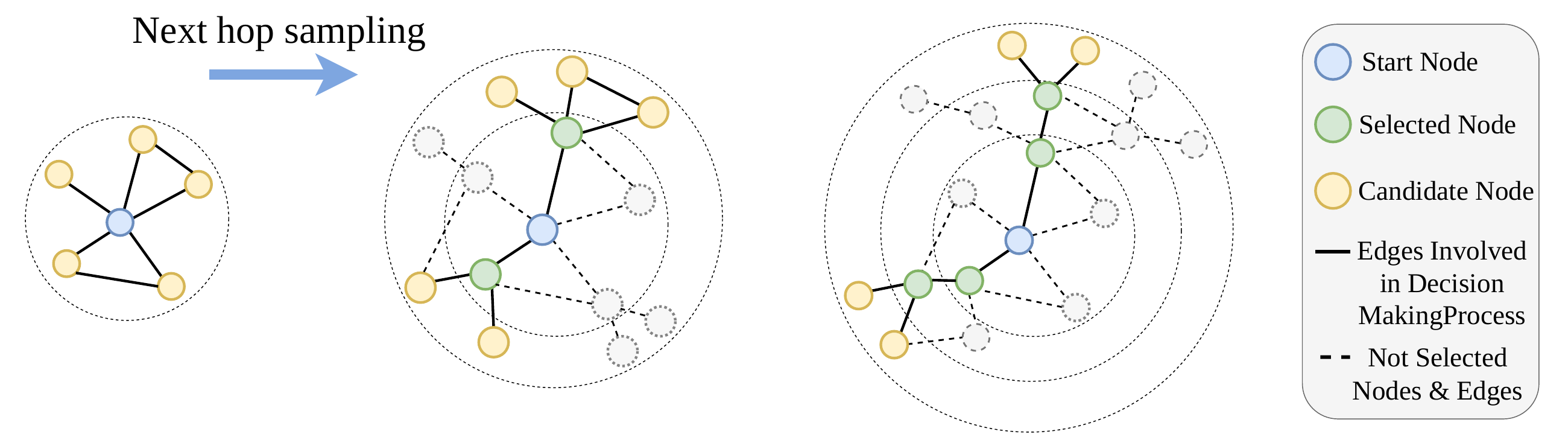}
    \caption{
    Illustration of multi-hop neighborhood sampling. Starting from a query node, the policy samples a subset of nodes from each hop and uses the selected nodes to guide sampling in the next hop. This enables exploration of informative multi-hop context without expanding full neighborhoods.
    }
    \label{fig:rl_multihop_sampling}
    \vspace{-0.1cm}
\end{figure}

This decomposition also holds in terms of the difficulty of learning,
as well as in representational complexity.
An iterative, local learner only needs to learn how to perform each local step,
and can then combine those learned steps (\cref{thm:errors-compose}) to achieve compositional generalization
even on never-before-seen combinations of tasks.
A monolithic learner such as dense retrieval
instead must learn to carry out the full mapping,
a far more complex operation (\cref{thm:sample-complexity}).

\paragraph{A practical neighborhood-based expansion algorithm}
Dense embeddings may be insufficient for directly retrieving the final answers, but they can still provide useful anchors for graph search. For instance, in a biomedical knowledge graph, a query about genes involved in a disease may be difficult to answer in one shot, especially with non-specialized embeddings. Nevertheless, a dense retriever can often identify the disease node itself, and the local relations around that node can provide routing signals toward relevant genes. We therefore view neighborhood expansion methods as first-stage retrievers: they trace a compact candidate subgraph to later be filtered by a stronger reasoning model.

A main challenge is that naive $k$-hop expansion grows rapidly. In graphs with hub nodes, even a 2--3 hop neighborhood can cover a large fraction of the graph; this expansion behavior is illustrated for STaRK-Prime in \cref{fig:multi-hop-expansion-results}(a). To control this growth, we use a simple frontier-filtered expansion procedure. Starting from core (or ``seed'') nodes, the method repeatedly considers only the one-hop neighbors of the current frontier. Each candidate neighbor $u$ is scored using the query similarity of the source node $v$, the connecting relation $r$, and the candidate node itself with a simple mean function:
$\displaystyle
\mathrm{score}(u)
=
\tfrac13 {\mathrm{sim}(q, u)}
+
\max_{v \xrightarrow{r} u, v \in \mathcal F}
\tfrac13 {\mathrm{sim}(q,v)}
+
\tfrac13 {\mathrm{sim}(q,r)}
,$
where $\mathcal{F}$ is the current frontier. At hop $h$, the method keeps only the top $b_h$ scoring candidates, adds them to the retrieved set, and uses them as the next frontier.
We call this procedure, illustrated in \cref{fig:rl_multihop_sampling},
\textsc{K-hop-with-filtering}.
Despite its simplicity, this filtered expansion substantially improves over pure cosine-similarity retrieval in Hit@Any and Recall@Any across different retrieval budgets (\cref{fig:multi-hop-expansion-results}).

\section{The need for learned expansion policies}

Greedy frontier expansion, however, is still limited. If an answer path contains intermediate connector nodes that are not semantically similar to the query, \textsc{K-hop-with-filtering} will never expand them, and so will never reach the useful region of the graph.

This might seem surprising:
retrieval is a coverage problem,
where our goal is to select nodes from the graph relevant to the query.
This is a famous example of a monotone submodular objective function,
where simple greedy optimization is guaranteed to achieve an objective value at least $1 - \frac1e \approx 63\%$ of the optimum \citep{nemhauser:submodular}.
The problem is that, by restricting ourself to local search,
we are no longer conducting simple greedy set optimization,
and so we lose these powerful guarantees.

\begin{tcolorbox}[theoremstyle,title={\cref{prop:appendix:greedy-bad}: Failure of frontier-greedy policies}]
\label{prop:frontier-greedy-failure}
Even with a monotone submodular objective, a purely greedy strategy restricted to the frontier can perform arbitrarily worse than the optimal reachable set.
\end{tcolorbox}

We thus need a selection policy
that can learn to cope with \emph{delayed rewards}:
we might need to go through a seemingly-irrelevant area
to reach a relevant one.
This is exactly the problem setting of reinforcement learning (RL).
We call our approach \methodname{}
(\textbf{Seed}-and-\textbf{E}xpand \textbf{R}etrieval),
where we learn an exploration policy
within a framework similar to \textsc{K-hop-with-filtering}.

RL policy learning in this setting is challenging due to the extreme sparsity of the reward signal: even a 3-hop neighborhood can contain tens or hundreds of thousands of nodes, while the set of correct answers may consist of only a few nodes.
The goal of \methodname{} is to construct a compact candidate set that contains relevant answer nodes while avoiding the combinatorial growth of full multi-hop expansion. We therefore formulate retrieval as a bounded expansion process,
$
V_0 \rightarrow V_1 \rightarrow \cdots \rightarrow V_T,
$,
where $V_t\subseteq V$ is the current selected set at expansion step $t$ and $T$ is a small fixed horizon.

\begin{figure}[t]
    \centering
    \includegraphics[width=\linewidth]{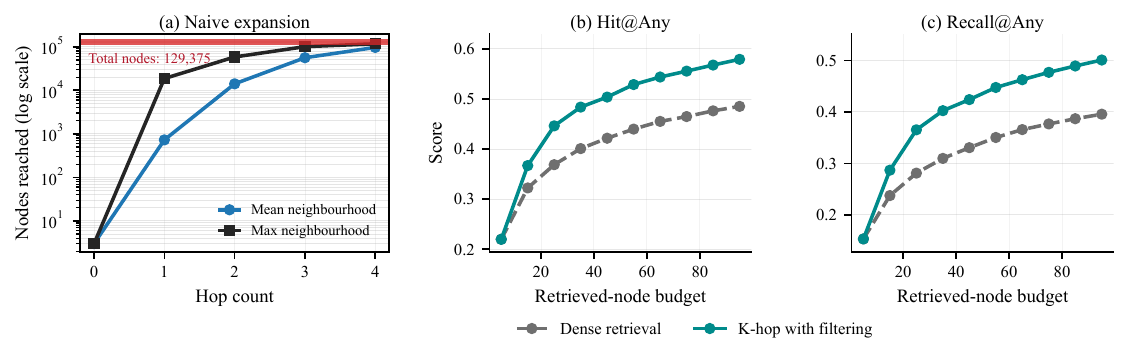}
    \vspace{-0.5cm}
    \caption{
    Expansion rates of naive multi-hop neighborhoods and retrieval performance of \textsc{K-hop-with-filtering} under different retrieval budgets
    on the STARK-PRIME dataset (see \cref{sec:experiments} for more setup details).
    For each query, we initialize expansion from the top three nodes by cosine similarity, and measure the mean and largest neighborhood size.
    }
    \label{fig:multi-hop-expansion-results}
    \vspace{-0.1cm}
\end{figure}

\paragraph{Seeding the Core Set.}
\methodname{} starts from a small set of core nodes obtained by selecting the nodes with the highest cosine similarity to the query using a dense bi-encoder retriever. These high-similarity nodes serve as semantic anchors for the subsequent graph expansion, allowing the policy to focus on relational and multi-hop reasoning around promising regions of the graph. Other seeding strategies, such as named entity recognition-based entity linking or alias matching, are also possible and would be a drop-in replacement; we do not explore this further in this work.

\paragraph{Bounded Search Space Construction.}
At each step, \methodname{} expands from the current selected set $V_t$. A direct implementation could score the full one-hop frontier using dense query-node similarity, optionally combined with relation-aware scores, and keep only the top candidates. This would require repeatedly accessing and scoring large portions of the KG during training, infeasible for large KGs.

We therefore begin with a coarse graph search by running \textsc{K-hop-with-filtering} from the initial seed nodes to extract a moderately sized query-specific subgraph, typically containing 100--200 nodes. This subgraph acts as a bounded search environment for the RL policy. During policy learning, \methodname{} expands only within this precomputed local subgraph, rather than loading the full KG or performing global nearest-neighbor search at every step. This makes training substantially more practical while preserving the main goal of the method: learning to explore multi-hop neighborhoods. %

Given this bounded subgraph, the policy operates iteratively. At step $t$, it considers the one-hop neighbors of the current selected set within the query-specific subgraph, scores the available candidates with a query-conditioned GNN, selects a small subset $S_t$, and updates $V_{t+1}=V_t\cup S_t$.

\paragraph{Graph-Aware Expansion Policy.}
At each expansion step, \methodname{} constructs the induced subgraph
$
G_t = \widetilde{G}_q[V_t\cup U_t],
$
where $\widetilde{G}_q$ is the query-specific subgraph extracted by \textsc{K-hop-with-filtering} and $U_t$ is the current frontier inside this subgraph. Let $z_q=f_q(q)$ be the query embedding and $z_v=f_v(x_v)$ the initial node embedding. A GNN produces query-conditioned node embeddings:
\[
\{h_v^{(L)}(q)\}_{v\in V_t\cup U_t}
=
\operatorname{GNN}_{\theta}
\left(G_t,z_q,\{z_v\}_{v\in V_t\cup U_t}\right).
\]
We use a modified version of the architecture from \citet{san,exphormer} as our GNN backbone. A lightweight policy head then scores each candidate frontier node $u\in U_t$ and produces a policy over the frontier. During training, we sample $c_t$ nodes from this policy to encourage exploration; at inference time, we greedily select the $c_t$ highest-scoring nodes.

\paragraph{Final Scoring.}
\textsc{K-hop-with-filtering} is effective at finding relevant nodes in the local neighborhood and substantially improves Hit and Recall metrics over dense retrieval.
Its gains in MRR, which cares about ranking within the returned set (see \cref{sec:metrics}), are often smaller,
because by definition the nodes where it differs from dense retrieval will have lower relevance scores.

\methodname{} includes a scoring head on the shared GNN backbone to sort the list of nodes,
trained with BPR loss~\citep{bpr}.
This ranking objective improves the final ordering and also provides direct supervised gradients to the GNN, helping it learn useful features for the expansion policy.

\begin{algorithm}[t]
\caption{\methodname{} Inference}
\label{alg:seeder_inference}
\begin{algorithmic}[1]
\Require Query $q$; graph $G=(V,E)$; encoders $f_q,f_v$; policy network with GNN backbone $\operatorname{GNN}_{\theta}$, expansion head $g_{\theta}$, and scoring head $\phi_{\theta}$; seed size $k_0$; local subgraph budget $B$; expansion sizes $\{c_t\}_{t=0}^{T-1}$; horizon $T$; output size $k$
\State $z_q \gets f_q(q)$
\State $V_0 \gets \operatorname{Top}_{k_0}\left(V;\cos(z_q,f_v(x_v))\right)$
\State $\widetilde{G}_q \gets \textsc{K-hop-with-filtering}(G,q,V_0,B)$
\Comment{extract a query-specific subgraph}

\For{$t=0,\ldots,T-1$}
    \State $U_t \gets \mathcal{N}_{\widetilde{G}_q}(V_t)\setminus V_t$
    \Comment{candidate frontier inside the bounded subgraph}
    \State $G_t \gets \widetilde{G}_q[V_t\cup U_t]$
    \State $\{h_v\}_{v\in V_t\cup U_t} \gets \operatorname{GNN}_{\theta}(G_t,z_q,\{f_v(x_v)\}_{v\in V_t\cup U_t})$
    \ForAll{$u\in U_t$}
        \State $\ell_t(u)\gets g_{\theta}(h_u,z_q)$
        \Comment{expansion logit for frontier node $u$}
    \EndFor
    \State $S_t \gets \operatorname{Top}_{c_t}(U_t;\ell_t)$
    \Comment{during training, sample from these logits}
    \State $V_{t+1}\gets V_t\cup S_t$
\EndFor

\State $G_T\gets \widetilde{G}_q[V_T]$
\State $\{h_v\}_{v\in V_T}\gets \operatorname{GNN}_{\theta}(G_T,z_q,\{f_v(x_v)\}_{v\in V_T})$
\ForAll{$v\in V_T$}
    \State $s_v \gets \phi_{\theta}(h_v,z_q)$
    \Comment{final relevance score from the scoring head}
\EndFor
\State \Return $R_k(q)\gets \operatorname{Top}_{k}(V_T;s_v)$
\end{algorithmic}
\end{algorithm}

\paragraph{Training Objective}
We optimize the expansion policy using a variant of the REINFORCE algorithm \citep{reinforce}.
For each query, the expansion process defines a trajectory
\[
\tau=(s_0,a_0,\ldots,s_{T-1},a_{T-1},s_T),
\]
where $a_t=S_t$ is the subset of frontier nodes selected at step $t$.

The RL policy is trained to discover as many relevant nodes as possible within the bounded search space, while the scoring head is trained to order the retrieved nodes with BPR loss. Accordingly, the RL reward is based on \textsc{Recall@Any}: within a fixed retrieval budget, a trajectory is better if it reaches a larger fraction of the ground-truth answer set. The scorer handles fine-grained ranking, while the exploration policy maximizes answer coverage.

Inspired by group-based policy optimization methods \citep{grpo,drgrpo,reinforce++}, we sample multiple trajectories for each query in every training iteration. In our experiments, we use $M=8$ trajectories per query. After computing the reward of each trajectory, we estimate its advantage by subtracting the mean reward of the trajectories sampled for the same query:
\[
\hat{R}^{(m)}
=
R(\tau^{(m)},q)
-
\frac{1}{M}
\sum_{j=1}^{M}
R(\tau^{(j)},q).
\]
This query-level centering makes the update depend on how a trajectory is relative to other trajectories sampled for the same query, reducing variance without requiring a learned critic. We also experimented without a baseline and with a greedy-decoding reward as the baseline, but the mean sampled reward was more stable (\cref{sec:ablation}).
Our policy update is a group-centered version of \textsc{REINFORCE}:
\[
\nabla_{\theta}J(\theta)
\approx
\frac{1}{M}
\sum_{m=1}^{M}
\hat{R}^{(m)}
\sum_{t=0}^{T-1}
\nabla_{\theta}
\log \pi_{\theta}(a_t^{(m)}\mid s_t^{(m)}).
\]

\begin{algorithm}[t]
\caption{\methodname{} Training}
\label{alg:seeder_training}
\begin{algorithmic}[1]
\Require Training queries $\mathcal{Q}_{\mathrm{train}}$ with answer sets $A(q)$; parameters $\theta$; trajectories per query $M$ %
\For{each training step}
    \State Sample minibatch $\mathcal{B}\subseteq \mathcal{Q}_{\mathrm{train}}$
    \ForAll{$q\in\mathcal{B}$}
        \State Sample $M$ expansion trajectories $\{\tau^{(m)}\}_{m=1}^{M}$ using the stochastic policy $\pi_\theta$
        \State Compute rewards $R^{(m)} \gets \textsc{Recall@Any}(\tau^{(m)}, A(q))$
        \State Normalize rewards within the group to obtain advantages $\hat{R}^{(m)}$

        \State Compute policy loss:
        $\displaystyle
        \mathcal{L}_{\mathrm{RL}}
        =
        -
        \frac{1}{M}
        \sum_{m=1}^{M}
        \hat{R}^{(m)}
        \log \pi_{\theta}(\tau^{(m)}\mid q)
        $

        \State Compute pairwise ranking loss $\mathcal{L}_{\mathrm{BPR}}$ using final node scores
        \State Update parameters to minimize
        $
        \mathcal{L}
        =
        \mathcal{L}_{\mathrm{RL}}
        +
        \lambda_{\mathrm{BPR}}\mathcal{L}_{\mathrm{BPR}}
        $
    \EndFor
\EndFor
\end{algorithmic}
\end{algorithm}

\section{Experiments} \label{sec:experiments}
\paragraph{Datasets.}
We evaluate \methodname{} on the three retrieval benchmarks introduced by STARK~\citep{stark}: STARK-PRIME, STARK-MAG, and STARK-AMAZON. These datasets are built on semi-structured knowledge bases that combine textual node descriptions with typed relational edges, covering precision-medicine queries, academic paper retrieval, and product search. We use the official train/validation/test splits provided by STARK. We provide detailed descriptions of the datasets and their underlying knowledge bases in \cref{app:datasets},
and discuss the metrics we use in \cref{sec:metrics}.

\paragraph{Main Results.}
\cref{tab:main_results} compares \methodname{} with dense, graph-augmented, and heuristic expansion baselines using MiniLM-L6-v2 \citep{minilmv2} as the text encoder. The baseline descriptions can be found in \cref{app:baselines}.
Across the three STARK benchmarks, \methodname{} consistently improves over dense retrieval and the non-learned k-hop exploration strategies. 
On STARK-PRIME, \methodname{} improves Hit@1 from $0.101$ to $0.199$, Hit@5 from $0.218$ to $0.411$, and MRR from $0.161$ to $0.293$ over dense retrieval. 
On STARK-MAG and STARK-AMAZON, \methodname{} also achieves the best results among the compared first-stage retrievers. 
These results show that learning the expansion policy is more effective than either one-shot dense scoring or heuristic graph expansion.

\cref{tab:prime_main_results} further evaluates STARK-PRIME with stronger node encoders. 
With OpenAI's \texttt{text-}\allowbreak\texttt{embedding-}\allowbreak\texttt{ada-002} \citep{openai2022ada002}, \methodname{} improves over the corresponding dense retriever from $0.126$ to $0.243$ in Hit@1 and from $0.360$ to $0.570$ in Recall@20. 
With Qwen3-Embedding-4B \citep{zhang2025qwen3}, performance increases further to $0.310$ Hit@1, $0.582$ Hit@5, $0.429$ MRR, and $0.647$ Recall@20. 
\methodname{} is complementary to stronger semantic encoders: better embeddings provide stronger seeds and node features, but learned graph expansion continues to add substantial gains.

Finally, \cref{tab:graphflow_comparison} compares \methodname{} with LLM-based agentic graph retrieval systems. 
These methods use large language models for sequential graph exploration and, in some variants, an additional reranking step. 
Such systems can obtain strong final ranking performance, but they are substantially more expensive than first-stage retrieval methods. 
In contrast, \methodname{} is designed as a lightweight first-stage retriever: it uses bounded local subgraphs and a learned graph policy to produce compact candidate sets. 
The results show that \methodname{} is already competitive with LLM-based methods despite being \emph{much} less computationally heavy,
and so provides excellent first-stage retrieval.

\paragraph{Latency and Memory Comparison}
\cref{fig:computation_latency_comparison} compares Recall@20 against per-query latency and model size on STARK-PRIME. \methodname{} achieves a strong accuracy--efficiency trade-off: it substantially improves over dense retrieval and \textsc{K-hop-with-filtering}, while using only 1.1M trainable parameters. Compared with GraphFlow, which uses an 8B-parameter LLM, \methodname{} has roughly $1/8000$ as many parameters and is tens of times faster per query, while still recovering a large fraction of the performance gain. This makes \methodname{} well suited as a lightweight first-stage retriever that produces compact candidate sets for downstream reranking or LLM-based reasoning.

\begin{table*}[t]
\centering
\caption{\textbf{Retrieval results on the STARK benchmarks using MiniLM-L6-v2 language model encoder.}
We report Hit@1, Hit@5, MRR, and Recall@20; higher is better.
Best results are shown in \best{bold} and second-best results are \second{underlined}.}
\label{tab:main_results}
\setlength{\tabcolsep}{4.5pt}
\renewcommand{\arraystretch}{1.18}
\begin{adjustbox}{width=\textwidth}
\begin{tabular}{>{\raggedright\arraybackslash}p{3.2cm}cccccccccccc}
\toprule
\rowcolor{headerblue}
& \multicolumn{4}{c}{\textbf{STARK-PRIME}} 
& \multicolumn{4}{c}{\textbf{STARK-MAG}} 
& \multicolumn{4}{c}{\textbf{STARK-AMAZON}} \\
\cmidrule(lr){2-5} \cmidrule(lr){6-9} \cmidrule(lr){10-13}
\rowcolor{subheaderblue}
 \multirow{1}{*}{\textbf{Model}} 
& \textbf{Hit@1} & \textbf{Hit@5} & \textbf{MRR} & \textbf{Recall@20}
& \textbf{Hit@1} & \textbf{Hit@5} & \textbf{MRR} & \textbf{Recall@20}
& \textbf{Hit@1} & \textbf{Hit@5} & \textbf{MRR} & \textbf{Recall@20} \\
\midrule

Dense Retriever      
& \second{0.101} & 0.218 & 0.161 & 0.259
& 0.203 & 0.358 & 0.273 & 0.344
& \second{0.306} & 0.525 & \second{0.411} & 0.417 \\

G-Retriever
& 0.099 & 0.232 & 0.162 & 0.321
& \second{0.206} & 0.371 & 0.277 & 0.366
& \second{0.306} & 0.526 & 0.408 & 0.416 \\

SubgraphRAG
& 0.099 & 0.220 & 0.156 & 0.262
& 0.203 & 0.359 & 0.273 & 0.345
& \second{0.306} & 0.525 & 0.408 & 0.414 \\

Beam Search
& 0.099 & 0.249 & \second{0.167} & 0.331
& 0.204 & 0.387 & 0.281 & 0.392
& \second{0.306} & 0.528 & 0.404 & 0.429 \\

A$^\star$ Search
& 0.099 & 0.250 & \second{0.167} & 0.334
& 0.204 & 0.387 & 0.282 & 0.392
& \second{0.306} & \second{0.529} & 0.404 & 0.430 \\

PPR
& 0.088 & 0.244 & 0.160 & 0.320
& 0.191 & 0.395 & 0.279 & \second{0.404}
& 0.280 & 0.524 & 0.391 & \second{0.435} \\

PPR+MMR
& 0.088 & \second{0.257} & 0.162 & 0.318
& 0.191 & \second{0.397} & 0.279 & 0.399
& 0.278 & 0.524 & 0.392 & 0.430 \\

\textsc{K-hop-w-filter}       
& \second{0.101} & 0.248 & \second{0.167} & \second{0.342}
& 0.205 & 0.391 & \second{0.285} & 0.398
& \second{0.306} & 0.525 & \second{0.411} & \second{0.435} \\

\rowcolor{lightgrayrow}
\textbf{\methodname{} (ours)} 
& \best{0.199} & \best{0.411} & \best{0.293} & \best{0.461}
& \best{0.244} & \best{0.439} & \best{0.332} & \best{0.449}
& \best{0.319} & \best{0.546} & \best{0.422} & \best{0.444} \\

\bottomrule
\end{tabular}
\end{adjustbox}
\vspace{-0.1cm}
\end{table*}

\begin{table*}[t]
\centering
\caption{\textbf{STARK-PRIME results and ablations.}
Left: retrieval performance with stronger node encoders. Right: ablation study and computation/latency comparison. Higher is better for all metrics (Hit@1, Hit@5, MRR, Recall@20) except latency.}
\label{tab:prime_results_and_ablations}
\scriptsize
\setlength{\tabcolsep}{3.2pt}
\renewcommand{\arraystretch}{1.06}

\begin{subtable}[t]{0.53\textwidth}
\centering
\caption{\textbf{Encoder comparison.}}
\label{tab:prime_main_results}
\begin{adjustbox}{width=\linewidth}
\begin{tabular}{>{\raggedright\arraybackslash}p{3.25cm}cccc}
\toprule
\rowcolor{headerblue}
\textbf{Model} & \textbf{H@1} & \textbf{H@5} & \textbf{MRR} & \textbf{R@20} \\
\midrule

\rowcolor{subheaderblue}
\multicolumn{5}{l}{\textbf{STARK baselines w/ ada-002}} \\
VSS
& 0.126 & 0.315 & 0.214 & 0.360 \\
Multi-VSS
& 0.151 & 0.336 & 0.235 & 0.381 \\
VSS + Claude2 Reranker
& 0.161 & 0.358 & 0.247 & 0.360 \\
VSS + GPT4 Reranker
& 0.183 & 0.373 & 0.266 & 0.341 \\

\midrule
\rowcolor{subheaderblue}
\multicolumn{5}{l}{\textbf{OpenAI-ada-002}} \\
Dense
& 0.126 & 0.315 & 0.214 & 0.360 \\
\textsc{K-hop-w-filter}
& 0.120 & 0.364 & 0.220 & 0.475 \\
\rowcolor{lightgrayrow}
\textbf{\methodname{}}
& \textbf{0.243} & \textbf{0.514} & \textbf{0.361} & \textbf{0.570} \\

\midrule
\rowcolor{subheaderblue}
\multicolumn{5}{l}{\textbf{Qwen3-Embedding-4B}} \\
Dense
& 0.154 & 0.375 & 0.254 & 0.441 \\
\textsc{K-hop-w-filter}
& 0.154 & 0.376 & 0.266 & 0.543 \\
\rowcolor{lightgrayrow}
\textbf{\methodname{}}
& \textbf{0.310} & \textbf{0.582} & \textbf{0.429} & \textbf{0.647} \\

\bottomrule
\end{tabular}
\end{adjustbox}
\end{subtable}
\hfill
\begin{minipage}[t]{0.44\textwidth}
\centering

\begin{subtable}[t]{\linewidth}
\centering
\caption{\textbf{Ablation study.}}
\label{tab:ablation_results}
\begin{adjustbox}{width=\linewidth}
\begin{tabular}{>{\raggedright\arraybackslash}p{3.1cm}cccc}
\toprule
\rowcolor{headerblue}
\textbf{Variant} & \textbf{H@1} & \textbf{H@5} & \textbf{MRR} & \textbf{R@20} \\
\midrule

No auxiliary loss
& 0.059 & 0.182 & 0.125 & 0.366 \\

Single trajectory
& 0.146 & 0.331 & 0.231 & 0.412 \\

No baseline
& 0.168 & 0.381 & 0.261 & 0.443 \\

Greedy baseline
& 0.114 & 0.299 & 0.197 & 0.359 \\

GNN+\textsc{K-Hop-w-filter}
& 0.168 & 0.361 & 0.258 & 0.436 \\

\rowcolor{lightgrayrow}
\textbf{\methodname{}}
& \best{0.199} & \best{0.411} & \best{0.293} & \best{0.461} \\

\bottomrule
\end{tabular}
\end{adjustbox}
\end{subtable}

\vspace{0.8em}

\begin{minipage}[t]{\linewidth}
\centering
\includegraphics[width=5cm]{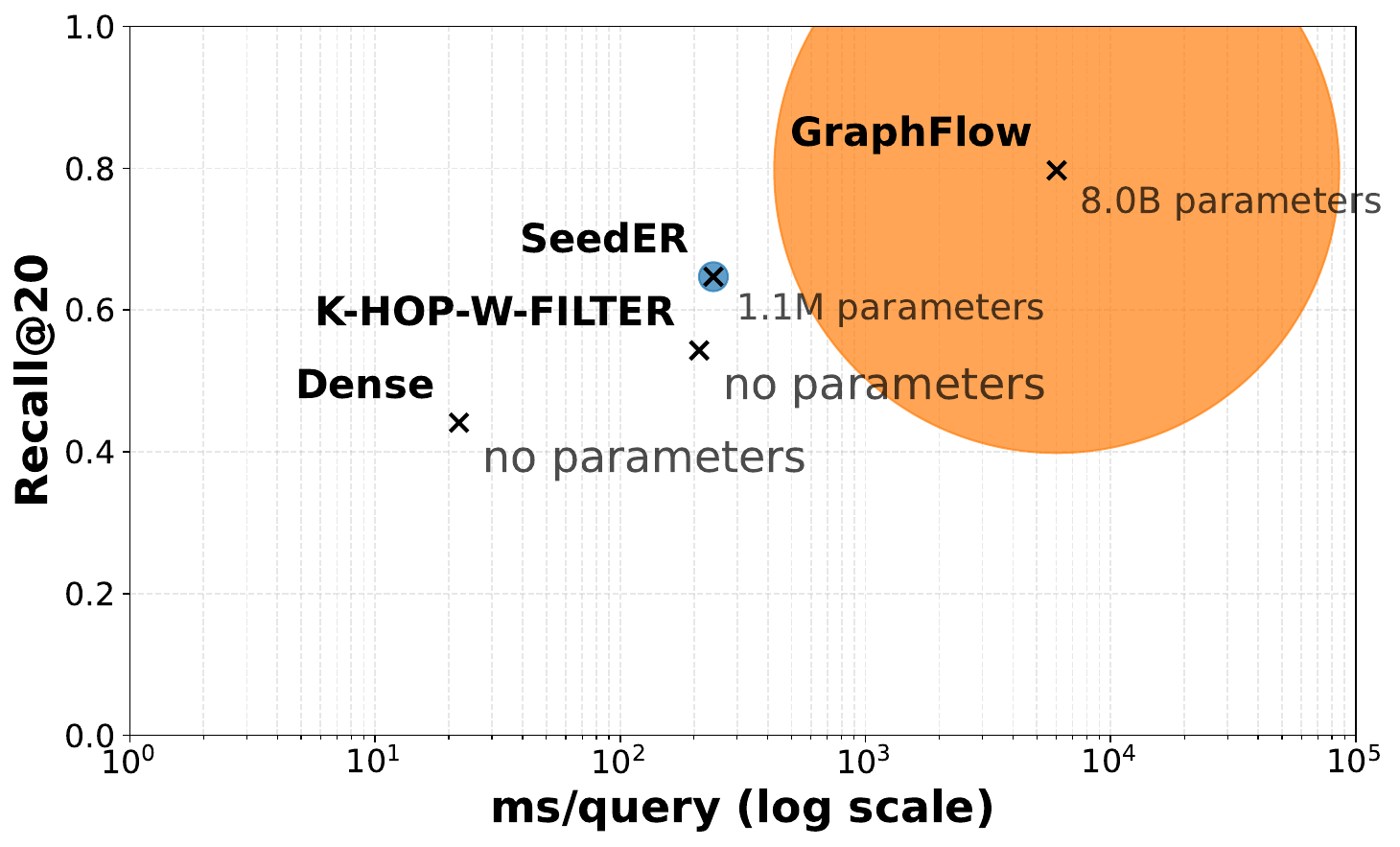}
\captionof{figure}{\textbf{Computation and latency comparison.}}
\label{fig:computation_latency_comparison}
\end{minipage}

\end{minipage}
\end{table*}

\begin{table*}[t]
\vspace{-0.1cm}
\centering
\caption{\textbf{Comparison with LLM-based RAG baselines on the STARK benchmarks.}
We report Hit@1, Hit@5, MRR, and Recall@20; higher is better.
Baseline numbers are taken from \citet{graphflow}. Our model uses Qwen3-4B-Embedding as the text encoder.
For \methodname{}, the main shaded row reports the mean and the smaller row below reports standard deviation across runs.}
\label{tab:graphflow_comparison}
\setlength{\tabcolsep}{3.6pt}
\renewcommand{\arraystretch}{1.13}
\begin{adjustbox}{width=\textwidth}
\begin{tabular}{>{\raggedright\arraybackslash}p{3.35cm}cccccccccccc}
\toprule
\rowcolor{headerblue}
& \multicolumn{4}{c}{\textbf{STARK-PRIME}} 
& \multicolumn{4}{c}{\textbf{STARK-MAG}} 
& \multicolumn{4}{c}{\textbf{STARK-AMAZON}} \\
\cmidrule(lr){2-5} \cmidrule(lr){6-9} \cmidrule(lr){10-13}
\rowcolor{subheaderblue}
\multirow{1}{*}{\textbf{Model}} 
& \textbf{H@1} & \textbf{H@5} & \textbf{MRR} & \textbf{R@20}
& \textbf{H@1} & \textbf{H@5} & \textbf{MRR} & \textbf{R@20}
& \textbf{H@1} & \textbf{H@5} & \textbf{MRR} & \textbf{R@20} \\
\midrule

\rowcolor{subheaderblue}
\rowcolor{subheaderblue}
\multicolumn{13}{l}{\textbf{Agent-based baselines without reranking}} \\

ToG + LLaMA3
& 0.219 & 0.340 & 0.266 & 0.338
& 0.120 & 0.141 & 0.127 & 0.068
& 0.042 & 0.062 & 0.053 & 0.026 \\

ToG + GPT-4o
& 0.167 & 0.398 & 0.270 & 0.544
& 0.233 & 0.567 & 0.364 & 0.480
& 0.207 & 0.414 & 0.309 & 0.258 \\

SFT
& 0.275 & 0.401 & 0.331 & 0.477
& 0.265 & 0.286 & 0.291 & 0.375
& 0.082 & 0.153 & 0.135 & 0.252 \\

PRM
& 0.210 & 0.467 & 0.313 & 0.460
& 0.261 & 0.280 & 0.285 & 0.367
& 0.201 & 0.263 & 0.282 & 0.357 \\

GraphFlow
& 0.398 & 0.717 & 0.546 & 0.797
& 0.293 & 0.586 & 0.413 & 0.572
& 0.196 & 0.442 & 0.317 & 0.362 \\

\midrule
\rowcolor{subheaderblue}
\multicolumn{13}{l}{\textbf{Agent-based baselines with reranking}} \\

ToG + LLaMA3
& 0.219 & 0.340 & 0.266 & 0.338
& 0.120 & 0.141 & 0.127 & 0.068
& 0.042 & 0.062 & 0.053 & 0.026 \\

ToG + GPT-4o
& 0.533 & 0.637 & 0.578 & 0.544
& 0.267 & 0.567 & 0.397 & 0.480
& 0.276 & 0.517 & 0.391 & 0.258 \\

SFT 
& 0.238 & 0.525 & 0.360 & 0.477
& 0.276 & 0.449 & 0.364 & 0.375
& 0.122 & 0.306 & 0.215 & 0.252 \\

PRM
& 0.229 & 0.282 & 0.269 & 0.460
& 0.273 & 0.441 & 0.337 & 0.367
& 0.213 & 0.425 & 0.320 & 0.357 \\

GraphFlow
& 0.514 & 0.721 & 0.614 & 0.797
& 0.391 & 0.575 & 0.478 & 0.572
& 0.479 & 0.650 & 0.555 & 0.362 \\

\midrule
\rowcolor{lightgrayrow}
\textbf{\methodname{}} {\scriptsize mean}
& 0.310 & 0.582 & 0.429 & 0.647
& 0.397 & 0.620 & 0.495 & 0.607
& 0.406 & 0.653 & 0.516 & 0.527 \\

\rowcolor{lightgrayrow}
{\scriptsize\hspace{0.75em}standard deviation}
& {\scriptsize (0.024)} & {\scriptsize (0.021)} & {\scriptsize (0.023)} & {\scriptsize (0.006)}
& {\scriptsize (0.022)} & {\scriptsize (0.012)} & {\scriptsize (0.017)} & {\scriptsize (0.005)}
& {\scriptsize (0.015)} & {\scriptsize (0.011)} & {\scriptsize (0.012)} & {\scriptsize (0.003)} \\

\bottomrule
\end{tabular}
\end{adjustbox}
\vspace{-0.1cm}
\end{table*}

\subsection{Ablation Studies}
\label{sec:ablation}

We ablate the main components of \methodname{} on STARK-PRIME. First, we remove the final scoring head to test whether the expansion policy alone can maintain recall and produce a useful ranking. Second, we replace our group-based \textsc{REINFORCE} objective with a single-trajectory variant to measure the effect of sampling multiple trajectories per query. Third, we ablate the mean-reward baseline by comparing it with no baseline and a greedy-decoding baseline. Finally, we train the same GNN backbone only to rerank the full subgraph extracted by \textsc{K-hop-with-filtering}, isolating the benefit of learned expansion from learned reranking.

Overall, each component contributes to the final performance. The scoring head improves rank-sensitive metrics and provides supervised gradients that help the GNN backbone learn better retrieval features. Multiple trajectory sampling and the mean-reward baseline stabilize learning under sparse retrieval rewards. Although the GNN reranker effectively reorders the \textsc{K-hop-with-filtering} subgraph, the full RL-based \methodname{} still performs best across all metrics, supporting our design of combining learned candidate discovery with supervised final ranking.

\section{Conclusion}
\label{sec:conclusion}

We introduced \methodname{}, a seed-and-expand retriever for knowledge graphs that combines dense-retrieval seeds with a learned, query-conditioned policy for selective multi-hop expansion. By using graph structure directly, \methodname{} can recover relevant nodes that are not textually similar to the query but are reachable through informative relational paths.

Our theoretical analysis shows that dense retrieval can face capacity and sample-complexity bottlenecks on compositional graph queries, while iterative local policies provide a more efficient alternative. Empirically, \methodname{} improves over dense retrieval and heuristic graph expansion across STARK benchmarks, with particularly strong gains on STARK-PRIME. We also show that \methodname{} is compatible with a diverse set of dense encoders and consistently improves retrieval performance on top of them. These results suggest that selective learned expansion offers a practical middle ground between cheap global embedding similarity and expensive LLM-based graph exploration.

\newpage

\bibliography{refs}
\bibliographystyle{valence}

\newpage

\appendix
\section*{Appendix}
\crefalias{section}{appendix}
\crefalias{subsection}{appendix}

\section{Dataset Details}
\label{app:datasets}

We evaluate \methodname{} on the three semi-structured retrieval benchmarks introduced by STARK~\citep{stark}: STARK-PRIME, STARK-MAG, and STARK-AMAZON. Each benchmark is built on top of a semi-structured knowledge base (SKB) that combines textual information associated with entities and typed relational information between entities. Given a natural-language query, the retrieval task is to identify the relevant node entities from the corresponding SKB. The queries are designed to combine textual requirements with relational constraints, making the benchmark suitable for evaluating retrieval methods that must reason over both modalities. We use the official train, validation, and test splits provided by STARK for all three datasets.

\paragraph{STARK-PRIME.}
STARK-PRIME is a biomedical retrieval benchmark built from PrimeKG \citep{primekg}, a precision-medicine knowledge graph. Its knowledge base contains ten entity types, including \texttt{disease}, \texttt{drug}, \texttt{gene/protein}, \texttt{pathway}, \texttt{anatomy}, \texttt{effect/phenotype}, \texttt{biological\_process}, \texttt{molecular\_function}, \texttt{cellular\_component}, and \texttt{exposure}. It contains eighteen relation types, covering biomedical relations such as \texttt{associated\_with}, \texttt{indication}, \texttt{contraindication}, \texttt{target}, \texttt{carrier}, \texttt{enzyme}, \texttt{transporter}, \texttt{side\_effect}, \texttt{phenotype\_present}, \texttt{phenotype\_absent}, and \texttt{expression\_present}. The textual information includes descriptions of diseases and drugs from PrimeKG, and additional textual attributes for genes/proteins and pathways from external biomedical databases. STARK-PRIME queries cover different biomedical use cases and are designed to reflect different user roles, including medical scientists, doctors, and patients. Compared with STARK-AMAZON and STARK-MAG, STARK-PRIME has fewer nodes but a denser and more diverse relational structure. The dataset contains 11,204 queries.

\paragraph{STARK-MAG.}
STARK-MAG is an academic-paper retrieval benchmark constructed from OGBN-MAG, OGBN-papers100M \citep{ogb}, and Microsoft Academic Graph \citep{microsoftgraph}. Its knowledge base contains four entity types: \texttt{paper}, \texttt{author}, \texttt{institution}, and \texttt{field\_of\_study}. The relation types include authorship, paper-field associations, citations between papers, and author-institution affiliations. Textual information is primarily associated with paper nodes and includes titles and abstracts from OGBN-papers100M, augmented with additional metadata such as venues, author names, and institution names from Microsoft Academic Graph. The queries ask for papers satisfying both textual criteria, such as topic, method, or contribution, and relational constraints, such as being authored by a particular researcher, belonging to a field, citing another paper, or being written by researchers from a given institution. STARK-MAG contains 13,323 queries.

\paragraph{STARK-AMAZON.}
STARK-AMAZON is a product-retrieval benchmark constructed from the Sports and Outdoors category of Amazon Product Reviews and Amazon Question and Answer Data. Its knowledge base contains two entity types, \texttt{product} and \texttt{brand}, and three relation types: \texttt{also\_bought}, \texttt{also\_viewed}, and \texttt{has\_brand}. The textual information is obtained from product metadata, product descriptions, prices, customer reviews, and customer Q\&A records. Brand entities are associated with their brand titles. The benchmark contains customer-oriented queries that resemble real product-search requests. These queries often describe desired product properties, such as functionality, quality, style, or use case, while also including relational constraints such as brand membership or relationships to complementary and substitute products. STARK-AMAZON contains 9,100 queries, with a large fraction of queries having multiple correct answers.

\begin{table}[b]
\centering
\caption{Statistics of the semi-structured knowledge bases used in STARK. The statistics are reported by the original STARK paper.}
\label{tab:skb_statistics}
\scalebox{0.85}{
\begin{tabular}{lrrrrrr}
\toprule
Dataset & 
\# Node Types & 
\# Relation Types & 
Avg. Degree & 
\# Nodes & 
\# Edges & 
\# Tokens \\
\midrule
STARK-PRIME  & 10 & 18 & 62.6 & 129,375   & 8,100,498  & 31,844,769 \\
STARK-MAG    & 4  & 4  & 10.6 & 1,872,968 & 19,919,698 & 212,602,571 \\
STARK-AMAZON & 2  & 3  & 3.0  & 1,032,407 & 3,886,603  & 592,067,882 \\
\bottomrule
\end{tabular}
}
\end{table}

\begin{table}[t]
\centering
\caption{Statistics of the STARK retrieval datasets. We use the official train/validation/test splits.}
\label{tab:stark_query_statistics}
\scalebox{0.9}{
\begin{tabular}{lrrrr}
\toprule
Dataset & 
\# Queries & 
\# Queries w/ Multiple Answers & 
Avg. \# Answers & 
Train / Val / Test \\
\midrule
STARK-PRIME  & 11,204 & 4,188 & 2.56  & 0.55 / 0.20 / 0.25 \\
STARK-MAG    & 13,323 & 6,872 & 2.78  & 0.60 / 0.20 / 0.20 \\
STARK-AMAZON & 9,100  & 7,082 & 17.99 & 0.65 / 0.17 / 0.18 \\
\bottomrule
\end{tabular}
}
\end{table}

\section{Extended Theory and Proofs}
\label{sec:theory_appendix}

\subsection{A hard problem for dense retrieval}
\label{app:dense-lb}

We formalize a family of KGs where multi-hop reachability induces a hard classification subproblem. In this setting, answering global reachability queries based only on a single node embedding requires each node to store a massive amount of information. By contrast, local multi-step tracing only needs a small amount of per-node information storage.

\paragraph{Dense retrieval model class.} We define a dense retrieval model class as any model that decides whether a node answers a query based exclusively on a query embedding and a node embedding. While this is typically implemented via cosine similarity, we generalize this to any retrieval function that takes both embeddings as input and returns a boolean match.

We treat embeddings as finite sequences of binary bits, rather than infinite-precision real numbers.
This is typical for information theoretic and communication complexity results;
while it is not a perfect match for practice
(where it is very difficult to gain any advantage from using 17 rather than 32 bits of information),
it is far more theoretically reasonable than infinite precision.
For instance,
in machines endowed only with infinite-precision arithmetic
and a floor operation,
not only NP but even PSPACE are decidable in randomized polynomial time \citep{ram-power}.

\begin{definition}[Relation tracing graphs] \label{def:rel-tracing}
Let $G$ be a finite directed graph $G=(V,E)$,
with nodes $V = [n]$.
Define the edges by $k$ permutations $\sigma_1, \dots, \sigma_k$ on $V$: a directed edge $u \xrightarrow{r} v$ exists if and only if $v = \sigma_r(u)$ for some relation type $r \in \{1, \dots, k\}$. 
Each node in $V$ has both in-degree and out-degree of $k$.
Let $r = (r_1, \dots, r_\ell)$ be a sequence of $\ell$ relation types,
and define 
$\sigma_r = \sigma_{r_\ell} \circ \sigma_{r_{\ell-1}} \circ \cdots \circ \sigma_{r_1}$.
Define $\mathcal R := \bigcup_{\ell = 0}^{L} [k]^\ell$ to be the space of possible queries.
A query in this model asks,
given an $s \in V$ and $r \in \mathcal R$,
to find $t = \sigma_r(s) \in V$.
Also define $\sigma^{-1}_r$ as the inverse mapping,
$s = \sigma^{-1}_r(t)$.
\end{definition}

\begin{theorem} \label{thm:hard-decisions}
Sample a relation-tracing graph $G$ with independent uniform permutations
 \(\sigma_1,\dots,\sigma_k\) of \([n]\),
 where $k \ge 2$.
After observing $G$,
we will assign binary strings $x_t \in \{0, 1\}^*$
to the vertices $t \in [n]$,
to be used as inputs to a decision function
$
    D:\{0,1\}^*\times[n]\times \mathcal R \to \{0,1\}
$.
Note that $D$ should be fixed independently from $G$.\footnote{Otherwise, $x_t$ could simply encode the integer $t$ in length $\log n$, and $D$ could answer by lookup on the known graph.}
Suppose that the features are assigned such that, for every
\(t,s\in[n]\) and every $r \in \mathcal R$,
\[
    D(x_t,s,r)=1
    \qquad\text{if and only if}\qquad
    t=\sigma_r(s).
\]
If $k^L \le \frac{k-1}{k+3} n$,
then
using $\abs{x_t}$ to denote the length of the binary string $x_t$,
we must have
\[
    \E_G\left[\frac1n\sum_{t=1}^n \abs{x_t} \right]
    \ge
    \frac14 k^L \log_2\left( \frac n k \right) - \frac12
.\]
In particular, choosing $k^L = \Theta(n / \log n)$
yields that
$
    \E_G\left[\frac1n\sum_{t=1}^n \abs{x_t} \right]
    = \Omega(n)
$.
\end{theorem}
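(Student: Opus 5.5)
The plan is an information-theoretic counting argument: each feature string $x_t$, read through the fixed decoder $D$, must reveal the entire depth-$L$ backward tree of $t$ in $G$, and for a random $G$ that tree carries about $k^L\log_2(n/k)$ bits of entropy.

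\emph{Step 1 (decoding).} Fix $t$ and restrict attention to the queries $r\in[k]^L$ of length exactly $L$. Since $D$ is fixed independently of $G$, the string $x_t$ determines the whole function $r\mapsto \sigma_r^{-1}(t)$: for each $r$, the value $\sigma_r^{-1}(t)$ is the unique $s\in[n]$ with $D(x_t,s,r)=1$. Define
\[
Y_t:=\bigl(\sigma_r^{-1}(t)\bigr)_{r\in[k]^L}.
\]
Then $Y_t$ is a deterministic function of $x_t$, so $\ent(x_t)\ge \ent(Y_t)$, with entropies taken over the randomness of $G$.

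\emph{Step 2 (length versus entropy).} The strings $x_t\in\{0,1\}^*$ are not assumed prefix-free, so I cannot directly use $\E\abs{x_t}\ge \ent(x_t)$. Instead I will use the standard bound for one-to-one codes: there are fewer than $2^{\ell+1}$ strings of length at most $\ell$, which gives $\E\abs{x_t}\ge \ent(x_t)-\log_2(\ent(x_t)+1)-O(1)$. An alternative is to append a self-delimiting length header and compare with a prefix code. I expect the additive $-\tfrac12$ in the statement to come from this conversion together with the averaging over $t$. If the loss turns out to be logarithmic rather than constant, it is still dominated by the main term.

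\emph{Step 3 (entropy of the backward tree).} Expose $Y_t$ by the principle of deferred decisions. Start from $t$ and apply $\sigma_{r_L}^{-1}$ first, then $\sigma_{r_{L-1}}^{-1}$, and so on, exploring the $k$-ary tree level by level. At each step we query some $\sigma_j^{-1}(u)$.
\begin{itemize}
\item If $u$ has not yet been queried under $\sigma_j$, then conditional on the history the answer is uniform over the images of $\sigma_j^{-1}$ not yet revealed. This set has size at least $n-m$, where $m\le 1+k+\dots+k^L\le \tfrac{k}{k-1}k^L$ bounds the total number of queries.
\item Only leaf queries at depth $L$ need to be counted.
\end{itemize}
By the chain rule, $\ent(Y_t)\ge \sum_{\text{leaves}}\Pr[\text{leaf query is fresh}]\cdot\log_2(n-m)$, with a correction for the possibility that a fresh answer coincides with an already-seen vertex. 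Collisions, meaning a fresh answer landing on an already-visited vertex, occur with probability at most $m/(n-m)$ per query. The hypothesis $k^L\le\frac{k-1}{k+3}n$ is designed to ensure that $m\le n/2$ or so, and that at least about half the leaves remain fresh in expectation. This yields $\ent(Y_t)\ge \tfrac14 k^L\log_2(n/k)$ after absorbing constants. The $\log_2(n/k)$ rather than $\log_2 n$ leaves room for the factor $n-m\ge n/k$-type losses.

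\emph{Step 4 (combine).} Average over $t$ and use $\ent(x_t)\ge\ent(Y_t)$ termwise. There is no need for joint entropy, since the bound is per node and linearity of expectation handles $\frac1n\sum_t$. For the final claim, take $k^L=\Theta(n/\log n)$, which satisfies the constraint, so that $k^L\log_2(n/k)=\Theta(n)$.

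\emph{Main obstacle.} The delicate part is Step 3. Because $Y_t$ is a tree, leaves sharing ancestors are correlated, and collisions can make several paths $r$ map to the same source. I first try to bound the expected number of distinct fresh leaf queries from below via a union bound on collisions, using $m/(n-m)$ per query and the hypothesis on $k^L$. If that turns out too lossy, I would instead lower-bound the entropy contributed only by the last level, conditioning on the whole depth-$(L-1)$ tree: each of the $k^{L-1}$ distinct parents has $k$ children, revealed by $k$ independent permutations, which are nearly uniform.
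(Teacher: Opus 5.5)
The overall architecture matches the paper's: decode the table from $x_t$, convert string length to entropy, and count fresh permutation values by deferred decisions. Averaging per node and restricting to depth-$L$ queries are both fine. The gap is in Step 3, at the point you flag.

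The number of fresh leaf queries is $k\abs{A_{L-1}}$, where $A_d$ is the set of vertices first seen at depth $d$, so lower-bounding $\E\abs{A_{L-1}}$ is the whole difficulty. Your proposed tool is a per-query collision probability of at most $m/(n-m)$, with $m\approx\frac{k}{k-1}k^L$ the \emph{total} query count. This is too weak: under $k^L\le\frac{k-1}{k+3}n$ it only gives $m/(n-m)\le k/3$, which is vacuous for $k\ge 3$.

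More fundamentally, a collision at depth $d$ wipes out a whole subtree. Let $C_d$ be the number of fresh depth-$d$ answers that land on an already-seen vertex or on an earlier answer at the same depth. Then $\abs{A_d}=k\abs{A_{d-1}}-C_d$, hence
\[
\abs{A_{L-1}}=k^{L-1}-\sum_{d=1}^{L-1}k^{L-1-d}C_d .
\]
Any depth-independent rate $p$ therefore yields only $\E\abs{A_{L-1}}\ge k^{L-1}\bigl(1-(L-1)p\bigr)$. That needs $k^L\lesssim n/L$, not the stated hypothesis.

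Your fallback does not avoid this. It assumes $k^{L-1}$ distinct parents at depth $L-1$, which is exactly the non-collapse claim to be proved. Moreover, parents that already appeared at a shallower depth contribute no fresh queries at all. Conversely, the collision correction you propose at the leaf level is unnecessary: a fresh leaf answer carries $\log_2$ of the number of unused values whether or not it hits a seen vertex.

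The missing idea is that the collision rate depends on depth. At depth $d$, a fresh answer is uniform over at least $n-\abs{B_{d-1}}$ unused values. It is bad only if it falls in a set of size at most $\abs{B_{d-1}}+k\abs{A_{d-1}}\le\frac{k^{d+1}}{k-1}$, so $p_d\lesssim\frac{k^{d+1}}{(k-1)n}$. Then $\E\abs{A_{L-1}}\ge k^{L-1}\bigl(1-\sum_d p_d\bigr)$. The sum is geometric, dominated by its last term, and below $\tfrac12$ under the hypothesis. This is the content of the paper's recursion $\E\abs{A_{\ell+1}}\ge k\,\E\abs{A_\ell}\bigl(1-\tfrac12(k-1)k^{\ell-L}\bigr)$.

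With this fix, your last-level count gives $\E\abs{A_{L-1}}\ge\tfrac12 k^{L-1}$. Each fresh leaf answer is uniform over at least $\frac{k+2}{k+3}n\ge n/k$ values, so $\ent(Y_t)\ge\tfrac12 k^L\log_2(n/k)$. The paper instead sums the fresh entropy over all levels of the full table $Z_t$.

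Finally, your one-to-one-code version of Step 2 loses an additive $O(\log(k^L\log n))$ rather than the stated $\tfrac12$. To get the statement exactly, use your second option: the unary-length prefix code gives $\ent(X)\le 2\E\abs{X}+1$. That costs a factor $2$, so you need the entropy bound $\tfrac12 k^L\log_2(n/k)$, not the $\tfrac14$ you aim for.
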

\begin{proof}
For $t \in [n]$ and \(r\in\bigcup_{\ell=0}^d[k]^\ell\),
define $Z_t(r)=\sigma_r^{-1}(t)$.
Correctness of \(D\) implies that \(x_t\) determines the whole function
\(Z_t\):  given \(x_t\) and \(r\), we can recover \(Z_t(r)\) by
searching over \(s\in[n]\) for the unique value satisfying
$D(x_t,s,r)=1$.
Therefore, if \(T\) is independently uniform on \([n]\) and \(X=x_T\), 
then \(Z_T\) is a deterministic function of \(X\).
Hence the Shannon entropy $\ent$ satisfies
\[
    \ent(Z_T) \le \ent(X).
\]
Since \(T\) is uniform and independent of \(G\),
\[
    \ent(Z_T)\ge \ent(Z_T\mid T)
    =
    \frac1n\sum_{t=1}^n \ent(Z_t).
\]
By symmetry, this equals \(\ent(Z_t)\) for any fixed choice of \(t\).
The general result on binary strings \cref{thm:binary-string-entropy}
then implies that
\[
    \E \abs{X}
    \ge \frac12 \ent(X) - \frac12
    \ge \frac12 \ent(Z_t) - \frac12
.\]
The result follows from \cref{thm:permutation-table-entropy}.

\end{proof}

\begin{lemma} \label{thm:binary-string-entropy}
    Let $X$ be a random binary string with random length $\abs{X}$.
    Then the Shannon entropy, measured in bits, satisfies
    \[ \ent(X) \le 2 \E \abs{X} + 1 .\]
\end{lemma}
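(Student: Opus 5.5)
We want $\ent(X) \le 2\E\abs{X} + 1$ for an arbitrary random binary string $X$. The plan is to split the entropy into the entropy of the length plus the entropy of the string given its length, and bound each term separately. Write $N = \abs{X}$. Since $N$ is a deterministic function of $X$, the chain rule gives
\[
\ent(X) = \ent(X, N) = \ent(N) + \ent(X \mid N).
\]

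\textbf{The conditional term.} Given $N = m$, the string $X$ takes at most $2^m$ values, so $\ent(X \mid N = m) \le m$. Averaging over $m$ gives $\ent(X \mid N) \le \E N$.

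\textbf{The length term.} Here we need $\ent(N) \le \E N + 1$ for a random variable $N$ on $\{0,1,2,\dots\}$. This is the only step that is not immediate, and we would get it from Gibbs' inequality with the geometric reference distribution $q(m) = 2^{-(m+1)}$, which sums to $1$ over $m \ge 0$. With $p$ the law of $N$,
\[
\ent(N) = -\sum_m p(m)\log_2 p(m) \le -\sum_m p(m)\log_2 q(m) = \sum_m p(m)(m+1) = \E N + 1.
\]
If $\E N = \infty$ the claim is vacuous, so we may assume $\E N$ is finite. Then the series above converges and Gibbs' inequality applies to countable supports.

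\textbf{Conclusion.} Adding the two bounds gives $\ent(X) \le \E N + 1 + \E N = 2\E\abs{X} + 1$.

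The only point needing care is the length term, specifically justifying Gibbs' inequality over a countable support. This is standard once $\E N < \infty$, since that makes $-\sum_m p(m)\log_2 q(m)$ finite.
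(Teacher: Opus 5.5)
Your proof is correct, and it reaches the bound by a different but equivalent route. The paper argues in one line. It exhibits a prefix-free code for $X$: write $\abs{X}$ in unary as $\abs{X}$ ones and a terminating zero, then append the bits of $X$. This code has length $2\abs{X}+1$, and the paper then invokes the source-coding bound $\ent(X)\le \E[\text{code length}]$.

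Your chain-rule decomposition is the same computation made explicit. The geometric reference law $q(m)=2^{-(m+1)}$ in your Gibbs step is exactly the Kraft weight of the $(m+1)$-bit unary prefix. Your bound $\ent(X\mid N)\le \E N$ accounts for the $\abs{X}$ raw bits that follow it. Equivalently, both proofs compare the law of $X$ against $Q(x)=2^{-(2\abs{x}+1)}$, and the bound is tight exactly when $X$ has this law.

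Your version buys self-containment. It avoids citing the source-coding theorem, whose converse is itself proved by Gibbs' inequality plus Kraft. It also handles the countable alphabet and the case $\E\abs{X}=\infty$ explicitly, which the paper's sketch leaves implicit. The paper's version is shorter and makes the operational meaning of the constant $2\E\abs{X}+1$ immediate.
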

\begin{proof}
    One way to see this is to construct a prefix-free code for $X$:
    we first encode $\abs X$
    in unary, $\abs X$ copies of $1$ followed by a $0$,
    followed by $X$.
    This has length $2 \abs X + 1$,
    and the result follows by the Shannon source-coding theorem.
\end{proof}

\begin{lemma}[Entropy lower bound] \label{thm:permutation-table-entropy}
Let \(k\ge 2\), and let
\(\sigma_1,\dots,\sigma_k\) be independent uniform random permutations
of \([n]\).
Fix \(t\in[n]\).
Let $Z_t \in [n]^{\bigcup_{\ell=0}^L [k]^\ell}$ denote the random function
$Z_t(r) = \sigma_r^{-1}(t)$
for all queries up to length $L$, $r \in \bigcup_{\ell=0}^L [k]^\ell$.
Let $\eta_k = \frac{k-1}{k+4}$.
If $k^L \le \eta_k n$,
then the Shannon entropy of $Z_t$, measured in bits, is lower-bounded as
\[ \ent(Z_t) \ge \frac12 k^L \log_2\left(\frac n k\right) .\]

\end{lemma}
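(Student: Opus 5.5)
The plan is to prove the bound by a chain-rule (sequential revelation) argument on an exploration tree rooted at $t$. Since $\sigma_r=\sigma_{r_\ell}\circ\cdots\circ\sigma_{r_1}$, we have $Z_t(r)=\sigma_{r_1}^{-1}(\sigma_{r_2}^{-1}(\cdots\sigma_{r_\ell}^{-1}(t)))$. So the table $Z_t$ is produced by walking backwards from $t$ along inverse permutations, and the queries form a $k$-ary tree. The node for $r=(r_1,\dots,r_\ell)$ has parent $(r_2,\dots,r_\ell)$, and its value is obtained from the parent's value $u$ by one evaluation $\sigma_{r_1}^{-1}(u)$.

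\textbf{Step 1: fix the order and count evaluations.} We enumerate the tree in breadth-first order, depth $0,1,\dots,L$. Each non-root node costs exactly one evaluation of some $\sigma_j^{-1}$ at some point $u$. The total number of evaluations is
\[
N=\sum_{\ell=1}^{L}k^\ell=\frac{k(k^L-1)}{k-1}.
\]
By the chain rule,
\[
\ent(Z_t)=\sum_{i}\ent(Y_i\mid Y_{<i}),
\]
where $Y_i$ is the $i$-th revealed value.

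\textbf{Step 2: bound each conditional term.} Condition on the past. If the pair $(j,u)$ has already been evaluated, $Y_i$ is determined and contributes $0$; we simply drop such steps. Otherwise the step is \emph{fresh}. Because the $\sigma_j$ are independent uniform permutations, and the past reveals at most $N$ input/output pairs of $\sigma_j^{-1}$, the new value is uniform over at least $n-N$ unused preimages. It therefore contributes at least $\log_2(n-N)$ bits. Hence
\[
\ent(Z_t)\ge \E[F]\,\log_2(n-N),
\]
where $F$ is the number of fresh steps. The hypothesis $k^L\le\eta_k n$ gives $N\le \frac{k}{k-1}\eta_k n$, which is at most $(1-\frac1k)n$ for this $\eta_k$. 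So $n-N\ge n/k$, and the logarithmic factor is at least $\log_2(n/k)$. It remains to show $\E[F]\ge \tfrac12 k^L$.

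\textbf{Step 3 (main obstacle): showing most evaluations are fresh.} A step fails to be fresh only if its input point $u$ has already been evaluated under the same $j$. That can happen only if an earlier fresh step landed on an already-visited vertex, i.e.\ a \emph{collision}. Each fresh step produces a collision with probability at most $N/(n-N)\le kN/n$, a constant fraction controlled by $\eta_k$.

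The naive accounting charges a collision at depth $d$ with its whole subtree of size about $k^{L-d}$. That loses a factor of $L$, which is too much. Instead, I would count only the $k^L$ leaf-level steps (depth $L$) and argue directly about each. A leaf step is fresh unless its $(j,u)$ pair repeats an earlier pair. A repeat requires the leaf's parent value $u$ to coincide with some earlier explored vertex, and by the same uniformity each such coincidence has probability at most about $N/(n-N)$. Summing over leaves gives
\[
\E[\#\text{non-fresh leaves}]\le k^L\cdot\frac{cN}{n}\le \tfrac12 k^L
\]
once $\eta_k=\frac{k-1}{k+4}$ is plugged in. This is where the constant $k+4$ should come from, and I expect this bookkeeping to be the delicate part. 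Combining gives $\E[F]\ge\frac12 k^L$ and hence the lemma.
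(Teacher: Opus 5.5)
\paragraph{Where the proposal stands.} Your Steps 1 and 2 are sound and are essentially the paper's argument. You reveal the table breadth-first, note that each fresh evaluation of some $\sigma_j^{-1}$ is uniform over at least $n-N\ge n/k$ unused values, and so reduce the lemma to showing that the expected number of fresh evaluations is at least $\frac12 k^L$. The gap is Step 3, which carries all the content of the lemma, and the estimates you sketch there fail in two ways.

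\emph{Quantitatively:} you measure collisions against the global count $N$. Under the hypothesis, $N=\frac{k(k^L-1)}{k-1}$ can be essentially $\frac{k}{k+4}n$. Then $N/(n-N)$ can be essentially $k/4$, and your bound $kN/n$ essentially $k^2/(k+4)$, which exceeds $1$ for $k\ge3$. Summing a per-leaf probability of $N/(n-N)$ over $k^L$ leaves gives about $\frac{k}{4}k^L$, which is not at most $\frac12 k^L$ for any $k\ge 3$.

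\emph{Structurally:} a leaf's non-freshness is not a single uniform coincidence. Suppose some evaluation on its root-to-leaf path landed on an already-seen vertex, or was itself non-fresh. Then the parent value $u$ is a deterministic copy of an earlier vertex, and the leaf is non-fresh with probability one. A per-leaf bound therefore has to sum over the whole path. That is exactly the subtree-charging argument you set aside, so the leaf-only count is not a way around it.

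\paragraph{The missing idea.} The collision probability depends on depth. A fresh draw at depth $\ell+1$ is wasted only in two cases. It may land in $B_\ell$, the set of vertices seen at depth at most $\ell$; for $\ell<L$ this set has $\abs{B_\ell}\le\frac{k^{\ell+1}-1}{k-1}\le\frac{n}{k+4}$. Or it may land on one of at most $(j-1)\abs{A_\ell}$ values produced at the same depth by relations $j'<j$. Either way it is drawn from a pool of at least $n-\abs{B_\ell}\ge\frac{k+3}{k+4}n$ unused values. This probability is geometrically small in $L-\ell$. So charging each wasted draw with the subtree it cuts off costs only a geometric sum, not a factor of $L$. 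Your worry about losing $L$ comes from using a depth-independent bound.

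\paragraph{How the paper does it.} The paper organizes this through the set $A_\ell$ of vertices first seen at depth $\ell$. The fresh evaluations at depth $\ell+1$ are exactly the $k\abs{A_\ell}$ expansions of $A_\ell$. The bounds above give $\E\abs{A_{\ell+1}}\ge k\,\E\abs{A_\ell}\bigl(1-\frac{k-1}{2}k^{\ell-L}\bigr)$. Unrolling with $\prod_i(1-a_i)\ge 1-\sum_i a_i$ yields $\E\abs{A_\ell}\ge\frac12 k^\ell$, and hence $\E[F]=k\sum_{\ell<L}\E\abs{A_\ell}\ge\frac12 k^L$. Your leaf-only target $\E[\#\text{fresh leaves}]=k\,\E\abs{A_{L-1}}\ge\frac12 k^L$ is true and follows from the same recursion, but only once this level-wise bookkeeping is done.
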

\begin{proof}
We will consider iteratively expanding each relation,
keeping track of the answers to all possible queries at a given depth through
the sets
\[
    S_0=\{t\},
    \qquad
    S_{\ell+1}=\bigcup_{j=1}^k \{ \sigma_j^{-1}(s) : s \in S_\ell \}
.\]

In our analysis,
we will track when we first observe each value of permutation,
$\sigma_j^{-1}(s)$,
and see how much entropy is ``revealed'' in that decision.
To do this,
it will be helpful to
track the set of vertices first observed at depth $\ell$ in the set $A_\ell$.
That is,
let
$B_\ell = \bigcup_{h=0}^\ell S_h$
be the set of all vertices ever seen up to depth \(\ell\), with $B_{-1} = \{ \}$;
then define $A_\ell = S_\ell \setminus B_{\ell-1}$.

We will first establish a useful fact:
by simply counting the number of all possible tuples
of length up to $\ell$, we have
\[
    \abs{B_\ell}
    \le 1+k+\cdots+k^\ell
    = \frac{k^{\ell+1}-1}{k-1}
.\]
For $\ell < L$,
since $k^L \le \eta_k n$
we further have
\[
    \abs{B_\ell}
    \le \frac{k^{\ell+1}-1}{k-1}
    \le \frac{k^L}{k-1}
    \le \frac{\eta_k}{k-1} n
    = \frac{1}{k+4} n
.\]

Now, consider the expansion from a known $A_\ell, B_\ell$
to depth $\ell+1$.
Any vertices we saw before step $\ell$ (those in $B_{\ell-1}$)
have already been fully expanded in the previous step;
our only new potential sources of randomness
come from vertices in $A_\ell$.
For each newly observed vertex $a \in A_\ell$
and each relation $j \in [k]$,
we will observe the $k \abs{A_i}$ values
$\sigma_j^{-1}(a)$,
none of which have previously been revealed.

For each $j$,
these values will be jointly sampled without replacement from
the not-yet-used values of $\sigma_j^{-1}$.
While we have not kept track of exactly which points those are,
we know anything outside of $B_\ell$ cannot have been used yet.
Since $\abs{B_\ell} \le \frac{1}{k+4} n$,
we are choosing $\abs{A_\ell}$ values out of a set of size at least $\left( 1 - \frac{1}{k+4} \right) n = \frac{k+3}{k+4} n$.
As these are ordered decisions,
this observation has conditional entropy
\[
    \log_2\left( \left(\text{\# unused values for $\sigma_j$}\right)_{(\abs{A_\ell})} \right)
    \ge 
    \log_2\left( \left( \frac{k+3}{k+4} n - \abs{A_\ell} \right)^{\abs{A_\ell}} \right)
,\]
and since $\abs{A_\ell} \le k^\ell \le k^L \le \eta_k n$,
we have that
$\frac{k+3}{k+4} n - \abs{A_\ell} \ge \left(\frac{k+3}{k+4} - \frac{k-1}{k+4}\right) n = \frac{4}{k+4} n \ge \frac{n}{k}$ for $k \ge 2$.

The $k$ permutations are independent,
so the total entropy added in the $\ell$th expansion is $k$ times this.

As $\abs{A_\ell}$ is random,
the exact amount of entropy added at each step is random.
By iterating conditional expectations,
however,
the overall entropy is the sum of 
the \emph{expected} conditional entropy at each step.
Thus
\[
    \ent(Z_t^L)
    \ge \sum_{\ell=0}^{L-1} k \E \abs{A_\ell} \log_2 \frac{n}{k}
    = \log_2\left( \frac n k \right) \sum_{\ell=0}^{L-1} k \E \abs{A_\ell}
.\]
We will show
$\E \abs{A_\ell} \ge \frac12 k^\ell$, and so
\[
    \ent(Z_t^L)
    \ge \log_2\left(\frac n k\right) k \sum_{\ell=0}^{L-1} \frac12 k^{\ell}
    = \frac12 \log_2\left(\frac n k \right) \; k \frac{k^L - 1}{k - 1}
    \ge \frac12 k^L \log_2\left( \frac n k \right)
,\]
as desired.
It thus only remains to show this lower bound on $\E \abs{A_\ell}$.

In our expansion of level $\ell+1$,
some of the newly sampled values of $\sigma_j(a)$
may be a node we've already seen (one in $B_\ell$),
while others may be outside $B_\ell$ but also selected by a different $\sigma_{j'}(a')$.
There are $\abs{B_\ell}$ nodes in the first set.
The exact number in the second set is complex,
but if we expand the relations in order,
when expanding $\sigma_j$
we will have previously expanded $(j-1) \abs{A_\ell}$ candidates
and so 
there cannot be more than $(j-1) \abs{A_\ell}$ of them.

We are selecting from the candidate pool of all points not yet used by relation $j$,
of which there will be at least $n - \abs{B_\ell} \ge \frac{k+3}{k+4} n$ candidates.
Thus, in expanding the $\abs{A_\ell}$ candidates for relation $j$,
the expected number of ``bad'' choices is at most
\begin{align*}
       \abs{A_\ell} \frac{\abs{B_\ell} + (j-1) \abs{A_\ell}}{n - \abs{B_\ell}}
  &\le \frac{k+4}{k+3} \frac1n \abs{A_\ell} (\abs{B_\ell} + (j-1) \abs{A_\ell})
\\&\le \frac{k+4}{k+3} \frac1n \abs{A_\ell} \left( \frac{k^{\ell+1}}{k-1} + (j-1) k^\ell \right)
\\&\le \frac{k+4}{k+3} \frac1n \abs{A_\ell} (j+1) k^\ell
  &&\text{since $k/(k-1) \le 2$ for $k \le 2$}
\\&\le \frac{k+4}{k+3} \frac{k-1}{k+4} \abs{A_\ell} (j+1) k^{\ell - L}
  &&\text{using $k^L \le \eta_k n$ so $\frac1n \le \eta_k k^{-L}$}
.\end{align*}
Since $\sum_{j=1}^k (j+1) = \frac12 k (k+3)$,
the expected total number of ``bad'' choices in the full expansion across all relations is at most
\[
\abs{A_\ell} \frac12 (k-1) k k^{\ell - L}
,\]
and so
\[
    \E \abs{A_{\ell+1}}
    \ge k \E \abs{A_{\ell}} \left( 1 -
         \frac12 (k-1) k^{\ell - L}
    \right)
.\]
We also know that $\abs{A_0} = 1$.
Thus, unrolling the recursion
and applying $\prod_i (1 - a_i) \ge 1 - \sum_i a_i$
for $a_i \in [0, 1]$,
we have
\begin{align*}
       \E \abs{A_L}
   &\ge k^L \prod_{\ell=0}^{L-1} \left( 1 - \frac12 (k-1) k^{\ell - L} \right)
   \ge k^L - \frac12 (k-1) \sum_{\ell=0}^{L-1} k^{\ell} \\
     &= k^L - \frac12 (k-1) \frac{k^L - 1}{k-1}
     = \frac12 (k^L + 1)
     > \frac12 k^L.\qedhere
\end{align*}
\end{proof}

\subsubsection{Success of local, iterative rules}

\begin{theorem} \label{thm:iterative-tracing}
    Take any relation-tracing graph $G$.
    Assign the binary features
    \[ x_s = \begin{bmatrix} \bin(\sigma_1(s)) \\ \cdots \\ \bin(\sigma_k(s)) \end{bmatrix} \in  \{ 0, 1 \}^{k \ceil{\log_2 n}} \]
    to each node, where $\bin(i) \in \{0, 1\}^{\ceil{\log_2 n}}$ represents the binary string of length $\ceil{\log_2 n}$
    giving the binary encoding of integer $i$;
    for instance, if $n = 6$ then $\bin(5) = (1, 0, 1)$.
    For simplicity in maintaining 1-based indexing, use $\bin(n) = \bin(0)$.
    
    There exists a linear classifier $f_j(x) = \argmax_i (W_j x + b)_i$
    such that $f_j(x_s) = \sigma_j(s)$.
    Then the following algorithm successfully returns $\sigma_r(s)$
    in $\abs r \le L$ steps:
    \begin{algorithmic}
        \Require Query $q$; source identifier $s$; node features $x_v$ as above for each $v \in [n]$
        \Ensure $v = \sigma_r(s)$
        \State $v \gets s$
        \For{$i = 1, \dots, \abs r$}
            \State $v \gets f_{r_i}(x_v)$
        \EndFor
    \end{algorithmic}
    Moreover, the same can be achieved without explicit access to node identifiers by adding additional features of size $\ceil{\log_2 n}$.
\end{theorem}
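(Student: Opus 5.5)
The proof has three parts. First we build an explicit linear classifier $f_j$ that decodes the $j$th block of $x_s$. Then an induction shows the loop computes $\sigma_r(s)$. Finally we add identity bits so that node identifiers are not needed.

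\emph{The classifier.} Write $m = \ceil{\log_2 n}$ and let $y \in \{0,1\}^m$ be the $j$th block of $x$, which equals $\bin(\sigma_j(s))$. For each output label $i \in [n]$, set $c_i = \bin(i)$ and use the matching score
\[
  (W_j x + b)_i \;=\; \sum_{p=1}^m \bigl( (2c_{i,p} - 1)\, y_p \bigr) \;-\; \sum_{p=1}^m c_{i,p} .
\]
This is linear in $x$: row $i$ of $W_j$ places the weights $2c_{i,p}-1$ on the coordinates of block $j$ and zero on every other block, and $b_i = -\sum_p c_{i,p}$.

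The score equals $-\|y - c_i\|_1$. To check this, note that for binary $y_p, c_{i,p}$ we have $(2c-1)y - c = -(y-c)^2 = -\abs{y-c}$. Hence the score is $0$ exactly when $c_i = y$ and at most $-1$ otherwise.

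The encodings $\bin(i)$ for $i \in [n]$ are pairwise distinct. Here we use the convention $\bin(n) = \bin(0)$: the values $0, 1, \dots, n-1$ have distinct $m$-bit encodings because $n \le 2^m$. So the argmax is unique and equals $\sigma_j(s)$. The matrix $W_j$ depends only on $j$ and $n$, not on $G$, which is the point of the theorem in contrast with \cref{thm:hard-decisions}.

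\emph{Correctness of the loop.} We induct on $i$ with the invariant $v = \sigma_{r_i} \circ \cdots \circ \sigma_{r_1}(s)$ after iteration $i$. The base case $i = 0$ gives $v = s$. For the step, $f_{r_{i+1}}(x_v) = \sigma_{r_{i+1}}(v)$ by the classifier construction. After $\abs r \le L$ iterations, $v = \sigma_r(s)$ by the definition of $\sigma_r$ in \cref{def:rel-tracing}. Each node feature has length $k\ceil{\log_2 n} = \bigO(\log \abs V)$ for fixed $k$.

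\emph{Removing explicit identifiers.} The loop needs to look up $x_v$ from the predicted identifier $v$. To avoid that, append $\bin(s)$ as an extra block to each $x_s$, costing $\ceil{\log_2 n}$ additional bits. Each step then outputs the code $\bin(\sigma_j(s))$ rather than an integer. This output is a linear read-off of block $j$, or equivalently the matching classifier above. The next node is selected as the unique node whose self-block equals that code, i.e.\ by matching features rather than by index.

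The main subtlety is this last part: what exactly counts as access without identifiers needs to be made precise. I would formalise it as retrieval by exact equality of the self-block. Distinctness of the $\bin$ codes guarantees this retrieval is unambiguous, including the wrap-around convention for $n$. The same code also serves as the source specification $s$ in the query.
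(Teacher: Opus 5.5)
Your proposal is correct and matches the paper's proof: the paper also puts the template matrix with columns $2\bin(i)-\mathbf 1$ and bias $b_i = -\mathbf 1\tp\bin(i)$ into the $j$th block of $W_j$, reads the score as minus a Hamming distance so that the argmax is $\sigma_j(s)$, and handles the identifier-free case by appending $\bin(s)$ to each $x_s$. Your write-up is somewhat more careful than the paper's. The paper writes the indicator of agreement where it should be disagreement, and it leaves implicit the distinctness of the codes under the $\bin(n)=\bin(0)$ convention and the induction over the loop.
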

\begin{proof}
    We first construct the linear classifier.
    Let $d = \ceil{\log_2 n}$
    and define
    \[
        \mathbf A = \begin{bmatrix}
            2 \bin(1) - \mathbf 1
          & 2 \bin(2) - \mathbf 1
          & \cdots
          & 2 \bin(n) - \mathbf 1
        \end{bmatrix} \in \R^{d \times n}
    ,\]
    where $\mathbf 1 \in \R^d$ is the all-ones vector.
    Also define a vector $b \in \R^d$ with entries $b_i = - \mathbf{1}\tp \bin(i)$,
    the negative of the number of ones in $\bin(i)$.
    Then we have that
    \begin{align*}
        (\mathbf A\tp \bin(i) + b)_j
      &= \left( 2 \bin(j) - \mathbf 1 \right)\tp \bin(i) - \mathbf{1}\tp \bin(j)
    \\&= 2 \bin(j)\tp \bin(i) - \mathbf{1}\tp \bin(j) - \mathbf{1}\tp \bin(i))
    \\&= \sum_{b=1}^d \left( 2 \bin(j)_b \bin(i)_b - \bin(j)_b - \bin(i)_b \right)
    \\&= \sum_{b=1}^d \begin{cases}
        0 & \text{if } \bin(i)_b = 0, \bin(j)_b = 0
     \\-1 & \text{if } \bin(i)_b = 0, \bin(j)_b = 1
     \\-1 & \text{if } \bin(i)_b = 1, \bin(j)_b = 0
     \\ 0 & \text{if } \bin(i)_b = 1, \bin(j)_b = 1
    \end{cases}
    \\&= - \sum_{b=1}^d \indic(\bin(i)_b = \bin(j)_b)
    .\end{align*}
    Thus, when $i = j$, the output is $0$;
    if $i \ne j$, the output is negative, between $-1$ and $-d$.
    We therefore have that $\argmax_j (\mathbf A\tp \bin(i) + \mathbf b)_j = i$.

    Finally, construct $W_j$ blockwise as
    \[
        W_j =
        \begin{bmatrix}
            \mathbf 0 & \cdots & \mathbf 0 &
            \mathbf A\tp
            & \mathbf 0 & \cdots & \mathbf 0
        \end{bmatrix}
        \in \R^{n \times k d}
    ,\]
    where the $k-1$ copies of the $\mathbf 0$ matrix are each shape $n \times \ceil{\log_2 n}$,
    and $\mathbf A\tp$ is in the $j$th block.
    Then indeed $f_j(x_s) = \argmax_i (W_j x_s + b)_i = \sigma_j(s)$.

    The correctness of the stated algorithm then follows immediately.
    To avoid access to explicit node identifiers,
    simply append $\bin(s)$ to each $x_s$.
\end{proof}

An entropy lower bound shows that it is not possible to use meaningfully fewer binary features than this approach.
Real-valued features with a linear classifier would also be possible in roughly $\log n$ feature dimension
by the Johnson-Lindenstrauss lemma;
a nonlinear classifier could use one dimension,
at the cost of total infeasibility of learning,
by e.g.\ taking the binary encoding above and interpreting it as the binary expansion of an integer.
(Indeed, one could do the same for an encoding of the whole graph, showing the unreasonableness of this approach.)

\begin{lemma} \label{thm:errors-compose}
    Consider the algorithm of \cref{thm:iterative-tracing}
    with imperfect classifiers $f_j$;
    say each has misclassification rate $\eps_j$.
    Then the probability the wrapper algorithm fails on a query $r$
    is at most $\sum_{i=1}^{\abs r} \eps_{r_i}$.
\end{lemma}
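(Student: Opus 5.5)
The plan is a union bound over the steps of the algorithm, applied along the path that the algorithm would follow if every classifier were correct. First we fix what ``misclassification rate'' means. The natural reading, matching the data distribution the classifiers would be trained on, is this: when the source $s$ is drawn uniformly from $[n]$,
\[
\Pr_{v}\bigl(f_j(x_v)\neq\sigma_j(v)\bigr)=\eps_j ,
\]
with $v$ uniform on $[n]$. We take the query source $s$ to be uniform as well, so the failure probability in the statement is over $s$ (and over any internal randomness of the classifiers).

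Next we define the ideal trajectory. Set $v_0=s$ and $v_i=\sigma_{r_i}(v_{i-1})$ for $i=1,\dots,\abs r$, so that $v_{\abs r}=\sigma_r(s)$. Let $E_i$ be the event $\{f_{r_i}(x_{v_{i-1}})\neq v_i\}$. We then argue by induction on $i$ that, on the complement of $E_1\cup\cdots\cup E_{i}$, the algorithm's iterate after step $i$ equals $v_i$. The base case is immediate since the iterate starts at $s=v_0$. For the inductive step, if the iterate equals $v_{i-1}$ and $E_i$ does not occur, the update returns $f_{r_i}(x_{v_{i-1}})=v_i$. It follows that failure implies $\bigcup_i E_i$, and the union bound gives $\Pr(\text{fail})\le\sum_i \Pr(E_i)$.

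The main obstacle is showing that $\Pr(E_i)=\eps_{r_i}$. This requires $v_{i-1}$ to be uniform on $[n]$. It is, because $v_{i-1}=\sigma_{r_{i-1}}\circ\cdots\circ\sigma_{r_1}(s)$ is a fixed permutation applied to a uniform $s$, and a permutation preserves the uniform distribution. This is the crucial point: the errors are charged along the \emph{ideal} path, never the actual one, so we never need to control the distribution of iterates after an error has occurred. Because of this, no independence between steps is required, and correlations among the $E_i$ are harmless under the union bound. Putting the two pieces together gives $\Pr(\text{fail})\le\sum_{i=1}^{\abs r}\eps_{r_i}$.

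If the rates were instead defined under a different node distribution, the same argument would go through whenever that distribution is invariant under each $\sigma_j$. We will note this as a remark.
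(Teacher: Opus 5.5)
Your proof is correct and is the same union-bound argument the paper uses; the paper's proof is a single line bounding $\Pr(\text{fail})$ by $\Pr\bigl(\bigcup_i \{f_{r_i}\text{ fails}\}\bigr)$. You make explicit two points the paper leaves implicit: the per-step failure events are taken along the ideal path $v_i=\sigma_{r_i}(v_{i-1})$, and each $v_{i-1}$ is uniform on $[n]$ because it is a fixed permutation of a uniform source, so $\Pr(E_i)=\eps_{r_i}$ under your reading of the misclassification rate.
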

\begin{proof}
    This is immediate from a union bound,
    \[
        \Pr(\text{algorithm fails})
        \le \Pr\left( \bigcup_{i=1}^{\abs r} f_{r_i}\text{ fails} \right)
        \le \sum_i \varepsilon_i
    .\qedhere\]
\end{proof}

\begin{proposition} \label{thm:sample-complexity}
    With per-relation supervision,
    the sample complexity to PAC-learn the classifiers $f_j$ from \cref{thm:iterative-tracing}
    is
    $\tilde\Theta\left( \frac{k n \log n + \log\frac1\delta}{\varepsilon} \right)$.
    For monolithic retrieval,
    the sample complexity from a linear classifier is
    $\Omega\left( \frac{k n^2 + \log\frac1\delta}{\varepsilon} \right)$
\end{proposition}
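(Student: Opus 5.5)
The plan is to bound each sample complexity by the VC dimension of the hypothesis class involved, using the standard realizable PAC bound: $\Theta\bigl((d + \log\frac1\delta)/\eps\bigr)$ samples up to a $\log\frac1\eps$ factor, which the $\tilde\Theta$ absorbs. So everything reduces to computing, or bounding, VC dimensions of the relevant linear classifier classes.

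\textbf{Per-relation classifiers.} Each $f_j$ is a multiclass linear classifier $x \mapsto \argmax_i (W_j x + b)_i$. Its weight matrix is $n \times kd$ with $d = \ceil{\log_2 n}$, plus $n$ biases, so it has $n(kd+1)$ parameters. For the upper bound, I would use the known result that multiclass linear predictors with $N$ classes and dimension $D$ have Natarajan dimension $\Theta(ND)$, and that the multiclass PAC sample complexity is governed by the Natarajan dimension up to a $\log N$ factor. This gives $\tilde O\bigl((kn\log n + \log\frac1\delta)/\eps\bigr)$ for each $j$, and learning all $k$ classifiers with confidence $\delta/k$ each only changes the log terms. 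Combined with \cref{thm:errors-compose}, a per-step error $\eps$ gives per-query error at most $L\eps$. For the matching lower bound, I would invoke the lower-bound half of the fundamental theorem of multiclass PAC learning, using the Natarajan-dimension lower bound $\Omega(nkd)$ for this class; this can be shown by shattering with the standard-basis-style inputs.

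\textbf{Monolithic retrieval.} Here a linear classifier must decide whether $D(x_t, s, r) = 1$ from a joint representation of the triple $(t, s, r)$. I would first fix the natural input representation to make this statement precise. The route I would commit to is to argue that the hypothesis class must be able to realize every target $\{(s,t): t = \sigma_r(s)\}$ for each $r$, where $\sigma_r$ ranges over compositions of $k$ arbitrary permutations. With one-hot or binary encodings of $s$ and $t$, a linear classifier realizing arbitrary permutation matrices needs a bilinear or feature-product input of dimension $\gtrsim n^2$ per relation. That gives VC dimension $\Omega(kn^2)$ because the $n^2$ pair-indicators per relation can be shattered independently across the $k$ relations. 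The standard PAC lower bound $\Omega\bigl((\mathrm{VC} + \log\frac1\delta)/\eps\bigr)$ then gives the stated bound.

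\textbf{Main obstacle.} The hard step is justifying the $kn^2$ lower bound in a model-independent way. I would argue via \cref{thm:hard-decisions}: a dense decision must encode $\Omega(n)$ bits per node, so a linear classifier over such features has $\Omega(n)$ feature dimension on each side. A bilinear form coupling $n$-dimensional query and node features across $k$ relations has $kn^2$ free parameters, and I would construct a shattered set of size $\Omega(kn^2)$ by choosing $(s,t,r)$ triples whose indicator features are linearly independent. If this fully general argument proves too delicate, I would state the bound for the explicit bilinear/one-hot representation and note that \cref{thm:hard-decisions} shows compressing below this is impossible.
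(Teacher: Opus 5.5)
Your per-relation argument is the paper's: $n$ classes times input dimension $k\ceil{\log_2 n}$ gives Natarajan dimension of order $kn\log n$, and the multiclass fundamental theorem gives both directions.

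\textbf{The monolithic half has a genuine gap.} Your step ``a linear classifier realizing arbitrary permutation matrices needs a feature-product input of dimension $\gtrsim n^2$ per relation'' is false. Take the fixed, graph-independent features $\phi(s,t) = [e_s \otimes \bin(t);\, e_s]$, of dimension $n(\ceil{\log_2 n}+1)$. Set block $s$ of the weight vector to $2\bin(\sigma(s)) - \mathbf 1$ and the coordinate on $e_s$ to $-\mathbf 1\tp \bin(\sigma(s))$. By the identity in the proof of \cref{thm:iterative-tracing}, the score is minus the Hamming distance between $\bin(t)$ and $\bin(\sigma(s))$. Thresholding at $-\frac12$ therefore realizes $\indic[t = \sigma(s)]$ for every permutation $\sigma$.

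So a class containing all permutation targets can have VC dimension $O(n\log n)$ per relation. The $n^2$ in your count comes only from choosing the full one-hot bilinear representation; that is an assumption, not a consequence of the task. \Cref{thm:hard-decisions} gives $\Omega(n)$ bits per node and does not say that compressing below that representation is impossible.

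Your general route through \cref{thm:hard-decisions} does not repair this. That theorem constrains only the node strings $x_t$, while the query side is the raw pair $(s,r)$, so ``$\Omega(n)$ on each side'' is unsupported. Your shattering step also presupposes linearly independent indicator features, which arbitrary learned node codes need not have.

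\textbf{The missing idea is the one the paper uses.} Treat monolithic retrieval as an $n$-class linear classifier, exactly as dense retrieval $\argmax_v \langle z_v, z_q\rangle$ is, with the node embeddings as class weight vectors. Then the fact you already used for the per-relation part (Natarajan dimension of order number of classes times input dimension) applies verbatim. \Cref{thm:hard-decisions} supplies input dimension $\Omega(n)$ in place of $k\ceil{\log_2 n}$, and the lower half of the multiclass fundamental theorem finishes. No bilinear coupling or explicit shattering is needed.

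Note that this argument yields $n\cdot\Omega(n)$; the paper does not derive a separate factor of $k$. The $\Omega(n)$ itself comes from the $k^L = \Theta(n/\log n)$ composed queries, not from one block per base relation. So you need not manufacture $k$ through per-relation $n^2$ blocks.
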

\begin{proof}
    Both results are based on the Natarajan dimension
    of multiclass linear classifiers \citep[Theorems 29.3 and 29.7]{ssbd},
    which is the number of classes ($n$) times the input dimension.
    The construction in \cref{thm:iterative-tracing}
    uses a feature dimension of $k \ceil{\log_2 n}$,
    giving the first result;
    \cref{thm:hard-decisions}
    shows that dimensionality $\Omega(n)$ is necessary for monolithic retrieval.
\end{proof}

\subsection{Submodular coverage under a frontier (reachability) constraint}
\label{sec:appendix:submodular}

\paragraph{Semantic-unit coverage model.}
Fix a query $q$. Let $\mathcal{Z}(q)$ denote a set of semantic units relevant to answering $q$.
Each node $v$ reveals a subset $C(v)\subseteq \mathcal{Z}(q)$ (e.g., from its textualized neighborhood).
Define a coverage objective
\begin{equation}
F(S) \;=\; \Big|\bigcup_{v\in S} C(v)\Big|.
\label{eq:appendix:coverage}
\end{equation}

\begin{lemma}[Monotone submodularity]
\label{lem:appendix:submod}
$F$ is monotone and submodular.
\end{lemma}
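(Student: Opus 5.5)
The plan is to argue directly from the set-union structure of $F$, with no appeal to earlier results. Throughout, write $U(S) = \bigcup_{v\in S} C(v)$, so that $F(S) = \abs{U(S)}$. Note that $U$ is monotone under inclusion and additive over unions: $U(S\cup T) = U(S)\cup U(T)$.

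\emph{Monotonicity.} If $S\subseteq T$, then $U(S)\subseteq U(T)$, since every unit covered by some $v\in S$ is also covered by that same $v\in T$. Cardinality is monotone under inclusion, so $F(S)\le F(T)$.

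\emph{Submodularity.} I will use the diminishing-returns form: for $S\subseteq T$ and $v\notin T$,
\[
F(S\cup\{v\})-F(S)\ \ge\ F(T\cup\{v\})-F(T).
\]
The key step is to write each marginal gain as the size of a set difference,
\[
F(S\cup\{v\})-F(S) = \abs{C(v)\setminus U(S)}.
\]
This holds because $U(S\cup\{v\})$ is the disjoint union of $U(S)$ and $C(v)\setminus U(S)$. Since $U(S)\subseteq U(T)$, we have $C(v)\setminus U(T)\subseteq C(v)\setminus U(S)$, and taking cardinalities gives the inequality.

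If the lattice form $F(S)+F(T)\ge F(S\cup T)+F(S\cap T)$ is preferred instead, it follows in the same way. We have $U(S\cup T)=U(S)\cup U(T)$ and $U(S\cap T)\subseteq U(S)\cap U(T)$. Inclusion--exclusion then gives $\abs{U(S)}+\abs{U(T)} = \abs{U(S)\cup U(T)}+\abs{U(S)\cap U(T)}$, and the last term is at least $\abs{U(S\cap T)}$.

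There is no real obstacle here. The only point needing care is the disjoint-decomposition identity for the marginal gain, which requires $\mathcal Z(q)$ to be finite so that the cardinalities are finite and subtraction is valid. This finiteness is implicit in the coverage model. Normalization $F(\emptyset)=0$ is also immediate, which is useful for the later greedy guarantees.
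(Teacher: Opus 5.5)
Your proof is correct and takes the same approach as the paper. The paper's proof is the one-line observation that $F$ is a set-coverage function, so adding nodes never uncovers units and marginal gains shrink as coverage grows. Your identity $F(S\cup\{v\})-F(S)=\bigl\lvert C(v)\setminus \bigcup_{u\in S}C(u)\bigr\rvert$ makes explicit exactly what the paper leaves implicit.
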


\begin{proof}
$F$ is a (weighted) set coverage function: adding elements cannot remove covered units (monotonicity),
and marginal gains diminish as coverage grows (submodularity).
\end{proof}

\paragraph{Reachability (frontier) feasibility.}
\methodname{} starts from a seed set $V_0$ and iteratively adds frontier nodes. With budget $B$ additional
nodes, define the feasible family $\mathcal{F}_B(V_0)$ as the sets $S$ that (i) contain $V_0$, (ii) have
$|S|\le |V_0|+B$, and (iii) can be constructed by repeatedly adding a 1-hop neighbor of the current set.

\begin{proposition}[Frontier-greedy can be arbitrarily bad]
\label{prop:appendix:greedy-bad}
There exist instances where $F$ is monotone submodular and feasibility is restricted to
$\mathcal{F}_B(V_0)$, but the policy that repeatedly selects the frontier node with maximum immediate
marginal gain achieves approximation ratio that tends to $0$.
\end{proposition}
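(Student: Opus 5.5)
The proof will be an explicit family of instances, parametrized by a budget $B$ and a gap parameter $\eps>0$. In each instance the optimal reachable set must cross a zero-gain connector node, while frontier-greedy is lured into a chain of small but positive gains.

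\textbf{Construction.} The seed set is $V_0=\{s\}$ with $C(s)=\emptyset$. From $s$ there are two branches.
\begin{itemize}
\item A \emph{decoy branch} is a path $s-d_1-d_2-\cdots-d_B$. Each $d_i$ covers its own private unit, so $C(d_i)=\{z_i\}$. To make the gains as small as we like, we can instead use weighted coverage with weight $\eps$ per decoy unit. Equivalently, we give each decoy unit weight $1$ and give the target many units, as below.
\item A \emph{hidden branch} is $s-c-h$. The connector has $C(c)=\emptyset$, and the hub has $C(h)=\{y_1,\dots,y_M\}$ with $M\gg B$.
\end{itemize}
By \cref{lem:appendix:submod} the objective $F$ is monotone submodular, since it is a coverage function (weighted if needed). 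Feasibility is exactly the frontier rule of $\mathcal{F}_B(V_0)$.

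\textbf{Greedy's run.} At step 1 the frontier is $\{d_1,c\}$. Adding $d_1$ gains $1$ and adding $c$ gains $0$, so greedy picks $d_1$. Inductively, at step $i$ the frontier is $\{d_i,c\}$ (plus nothing else, by the path structure). Greedy again picks $d_i$, because $h$ is not on the frontier until $c$ has been added. After $B$ steps greedy has collected $d_1,\dots,d_B$, for a value of $F=B$. The only subtle point is tie-breaking. To make greedy's choice forced regardless of the tie rule, I give every decoy a strictly positive gain and the connector exactly zero.

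\textbf{The reachable optimum.} For $B\ge 2$ the set $\{s,c,h\}$ is feasible and has value $M$. The approximation ratio is therefore at most $B/M$. Taking $M=B^2$, or $M\to\infty$ with $B$ fixed, sends the ratio to $0$.

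\textbf{Matching the intended reading.} I expect the main obstacle to be lining the construction up with the intended reading of ``arbitrarily bad.'' One could object that unbounded $M$ makes the instance trivially unbalanced. I will therefore also note a normalized version in which every node covers at most one unit per unit weight. Replace the hub $h$ with a star: $c$ connects to $h_1,\dots,h_B$, each covering a private target unit of weight $1$. Give the decoys weight $\eps$ each. Then greedy gets value $B\eps$, while the optimum $\{s,c,h_1,\dots,h_{B-1}\}$ gets $B-1$, so the ratio is $\frac{B\eps}{B-1}\to 0$ as $\eps\to 0$.

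The last step is to verify that greedy never reaches $c$ in this version. This holds because $c$'s gain stays $0<\eps$ at every step: the decoy path always offers a fresh positive-gain node as long as the path has length $B$.
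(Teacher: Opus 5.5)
Your construction is correct and is essentially the paper's: a decoy branch of $B$ unit-gain frontier nodes competes against a zero-gain connector guarding a terminal of value $M\gg B$, so greedy gets $B$ while the optimum gets at least $M$, giving ratio at most $B/M\to 0$. The differences are cosmetic: you use a single connector ($L=1$, hence $B\ge 2$) and arrange the decoys as a path so that strict gains force greedy's choice under any tie-breaking rule, and your weighted variant is an optional extra that reduces to the same unweighted construction after rescaling.
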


\begin{proof}
Construct a graph where from the seed $s$ one can either (i) collect $B$ frontier nodes each giving
unit gain, or (ii) traverse a length-$L$ path of ``connector'' nodes with zero immediate gain to
reach a terminal node that yields $M\gg B$ gain. Frontier-greedy never takes the connector steps and
achieves $\le B$; the optimal policy achieves $\approx M$, and $B/M\to 0$.
\end{proof}

\paragraph{A frontier-aware guarantee under a progress condition.}
Define the frontier of $S$ as $U(S)=\mathcal{N}(S)\setminus S$.
Assume a \emph{frontier-progress} condition: there exists $\beta\in(0,1]$ such that for any feasible
partial set $S$ with remaining budget $B-b$,
\begin{equation}
\max_{u\in U(S)} \Delta_F(u\mid S)
\;\ge\;
\beta \cdot \frac{F(S^\star)-F(S)}{B-b},
\label{eq:appendix:frontier-progress}
\end{equation}
where $S^\star\in\arg\max_{S\in\mathcal{F}_B(V_0)}F(S)$.

\begin{theorem}[Frontier-aware approximate greedy guarantee]
\label{thm:appendix:frontier-greedy}
Suppose a (possibly randomized) policy selects one node per step and satisfies
\[
\mathbb{E}\big[\Delta_F(v_t\mid V_{t-1})\big]
\;\ge\;
\alpha \cdot \max_{u\in U(V_{t-1})}\Delta_F(u\mid V_{t-1})
\quad \text{for all } t,
\]
for some $\alpha\in(0,1]$. Under~\eqref{eq:appendix:frontier-progress}, after $B$ additions,
\[
\mathbb{E}[F(V_B)] \;\ge\; \big(1-e^{-\alpha\beta}\big)\,F(S^\star).
\]
\end{theorem}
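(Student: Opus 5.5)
The argument follows the classical Nemhauser--Wolsey--Fisher template: derive a one-step recursion for the expected gap to the optimum, then unroll it. Write $V_t$ for the set after $t$ additions, $b=t$ for the budget used so far, and $\delta_t := F(S^\star) - \E[F(V_t)]$. Each $V_t$ produced by the policy is built by repeatedly adding frontier nodes, so it is a feasible partial set. Hence the frontier-progress condition~\eqref{eq:appendix:frontier-progress} applies pathwise to $S=V_{t-1}$ with remaining budget $B-(t-1)$. Monotonicity and submodularity from \cref{lem:appendix:submod} are not needed beyond what~\eqref{eq:appendix:frontier-progress} already encodes. They only make the condition natural, and ensure the marginal gains are nonnegative, so no clipping issues arise.

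First, fix the history up to step $t-1$ and combine the two hypotheses. Interpreting the expectation in the policy assumption as conditional on the history, we get
\[
\E\big[F(V_t)-F(V_{t-1}) \,\big|\, V_{t-1}\big] \ge \alpha \max_{u\in U(V_{t-1})}\Delta_F(u\mid V_{t-1}) \ge \frac{\alpha\beta}{B-t+1}\big(F(S^\star)-F(V_{t-1})\big).
\]
Taking total expectation, this is linear in $F(V_{t-1})$, so no Jensen step is needed, and it gives $\delta_t \le \big(1-\tfrac{\alpha\beta}{B-t+1}\big)\delta_{t-1}$. Next I unroll from $\delta_0 = F(S^\star)-F(V_0)\le F(S^\star)$, using $F\ge 0$. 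Applying $1-x\le e^{-x}$ gives
\[
\delta_B \le F(S^\star)\exp\Big(-\alpha\beta\sum_{t=1}^B \tfrac{1}{B-t+1}\Big) = F(S^\star)\,e^{-\alpha\beta H_B}.
\]
Since the harmonic number satisfies $H_B\ge 1$, this is at most $e^{-\alpha\beta}F(S^\star)$. Rearranging gives the claim.

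The main obstacle is interpretive rather than technical: what exactly the denominator $B-b$ in~\eqref{eq:appendix:frontier-progress} means. If one reads it as the full budget $B$ instead of the remaining budget, the recursion becomes $\delta_t\le(1-\alpha\beta/B)\delta_t{}_{-1}$. That still unrolls to $(1-\alpha\beta/B)^B\le e^{-\alpha\beta}$, so the bound survives either reading. I would present the remaining-budget version and remark that it only strengthens the exponent to $\alpha\beta H_B$. A second point to check is that each factor $1-\tfrac{\alpha\beta}{B-t+1}$ is nonnegative. This holds because $\alpha\beta\le 1\le B-t+1$, which is what makes multiplying the inequalities legitimate. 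Finally, the policy assumption must hold conditionally on every history, not just marginally. I would state this as the reading of ``for all $t$'', since with only a marginal guarantee the product of a random maximum with a random gap does not factor.
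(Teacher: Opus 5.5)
Your proposal is correct and is essentially the paper's proof: you define the expected gap $\delta_b = F(S^\star)-\mathbb{E}[F(V_b)]$, chain the policy condition with the frontier-progress condition to get $\delta_{b+1}\le\bigl(1-\frac{\alpha\beta}{B-b}\bigr)\delta_b$, and unroll to $e^{-\alpha\beta}$; your explicit $e^{-\alpha\beta H_B}\le e^{-\alpha\beta}$ step and nonnegativity-of-factors check just fill in the paper's one-line ``recursing yields'' step. One small correction to your closing caveat: the history-conditional reading is sufficient but not necessary, since the averaged version $\mathbb{E}[\Delta_F(v_t\mid V_{t-1})]\ge\alpha\,\mathbb{E}\bigl[\max_{u\in U(V_{t-1})}\Delta_F(u\mid V_{t-1})\bigr]$ already works---the denominators $B-b$ are deterministic and the frontier-progress bound holds pathwise, so every step is linear in expectation and no product of random quantities ever arises.
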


\begin{proof}
Let $\delta_b = F(S^\star)-\mathbb{E}[F(V_b)]$. By the policy condition and
\eqref{eq:appendix:frontier-progress},
\[
\mathbb{E}[F(V_{b+1})-F(V_b)]
\ge
\alpha\beta\cdot \mathbb{E}\!\left[\frac{F(S^\star)-F(V_b)}{B-b}\right],
\]
so $\delta_{b+1}\le \left(1-\frac{\alpha\beta}{B-b}\right)\delta_b$. Recursing yields
$\delta_B\le e^{-\alpha\beta}\delta_0\le e^{-\alpha\beta}F(S^\star)$.
\end{proof}

\section{Evaluation Metrics}
\label{sec:metrics}

We evaluate retrieval with standard top-$k$ metrics. Let $R_k(q)$ denote the top-$k$ retrieved nodes for query $q$.

\paragraph{Hit@$k$ and Recall@$k$.}
For any cutoff $k$,
\[
\mathrm{Hit@}k
=\frac{1}{|\mathcal{Q}|}\sum_{q\in\mathcal{Q}} \mathbf{1}\{A(q)\cap R_k(q)\neq \emptyset\},
\qquad
\mathrm{Recall@}k
=\frac{1}{|\mathcal{Q}|}\sum_{q\in\mathcal{Q}}\frac{|A(q)\cap R_k(q)|}{|A(q)|}.
\]
In the experiments, we report Hit@1, Hit@5, and Recall@20.

\paragraph{MRR.}
Let $\mathrm{rank}(q)=\min\{i:v_i\in A(q)\}$ be the rank of the first correct answer. Then
\[
\mathrm{MRR}=\frac{1}{|\mathcal{Q}|}\sum_{q\in\mathcal{Q}}\frac{1}{\mathrm{rank}(q)}.
\]

\paragraph{Hit@any and Recall@any.}
To evaluate candidate-set quality independent of the final ordering, we also report metrics over the full retrieved set $R(q)$:
\[
\mathrm{Hit@any}=\frac{1}{|\mathcal{Q}|}\sum_{q\in\mathcal{Q}} \mathbf{1}\{A(q)\cap R(q)\neq \emptyset\},
\qquad
\mathrm{Recall@any}=\frac{1}{|\mathcal{Q}|}\sum_{q\in\mathcal{Q}} \frac{|A(q)\cap R(q)|}{|A(q)|}.
\]

\section{Implementation Details}
\label{app:implementation_details}

This appendix provides additional implementation details for \methodname{}, including query--node feature fusion, the sparse graph transformer architecture, stochastic node sampling, supervised auxiliary losses, and optimization.

\subsection{Feature Construction and Query Injection}

Each node $v$ is initialized with a text embedding $z_v=f_v(x_v)$, and each query is represented by $z_q=f_q(q)$. Before applying the graph encoder, we inject the query representation into node features. In our implementation, this can be done by vector addition,
\[
\tilde{z}_v = z_v + W_q z_q,
\]
or by concatenation,
\[
\tilde{z}_v = [z_v \,\|\, W_q z_q \,\|\, \cos(z_v,z_q)],
\]
followed by a projection layer. The main experiments use a fixed query-conditioned feature map across all nodes in the induced subgraph, allowing the GNN to score nodes relative to the current query.

To reduce memory overhead and improve numerical stability, we optionally apply PCA or another linear dimensionality-reduction transform to the initial query and node embeddings before GNN processing. The transformed embeddings are normalized before being passed to the graph encoder.

\subsection{Sparse Graph Transformer Encoder}

For the graph encoder, we use a sparse graph transformer inspired by Exphormer-style architectures \citep{exphormer,spexphormer,gt_width}. Unlike the original Exphormer, our encoder does not add expander edges; attention is restricted to the observed sparse topology of the induced KG neighborhood.

Let $h_i\in\mathbb{R}^d$ denote the input representation of node $i$. For each layer, we compute
\[
Q_i = W_Q h_i,\qquad
K_i = W_K h_i,\qquad
V_i = W_V h_i.
\]
For an edge $(i,j)$ with edge feature $e_{ij}$, an edge network produces two modulation vectors:
\[
(E^K_{ij},E^V_{ij}) = \operatorname{MLP}_e(e_{ij}).
\]
The attention score from node $i$ to node $j$ is
\[
a_{ij}
=
\frac{
Q_i^\top (K_j \odot E^K_{ij})
}
{\sqrt{d}},
\]
where $\odot$ denotes elementwise multiplication. Attention weights are normalized over the neighborhood of each target node:
\[
\alpha_{ij}
=
\frac{\exp(a_{ij})}
{\sum_{j'\in\mathcal{N}(i)}\exp(a_{ij'})}.
\]
The message from $j$ to $i$ is
\[
m_{ij}
=
\alpha_{ij}(V_j + E^V_{ij}),
\]
and the aggregated representation is
\[
\bar{h}_i =
\sum_{j\in\mathcal{N}(i)} m_{ij}.
\]
Each layer then applies a residual connection, normalization, and a feed-forward network:
\[
h_i'
=
\operatorname{FFN}
\left(
\operatorname{Norm}(h_i+\bar{h}_i)
\right).
\]
We use this encoder both during intermediate expansion steps and during final scoring.

In contrast to QA-GNN~\citep{qa-gnn}, our design does not introduce a virtual query node. Instead, query information is directly infused into node representations, allowing nodes to become query-aware from the first message-passing layer. Consequently, nodes do not require an initial propagation step to receive information from a dedicated query node.

Although virtual nodes can be beneficial for reducing graph diameter, they are unnecessary in our setting. Since all nodes lie within only a few hops of the seed nodes, the resulting graph has a small radius. Therefore, techniques such as expander graph augmentations or virtual nodes, originally proposed in Exphormer~\citep{exphormer}, are not required in our setup.

\subsection{Stochastic Expansion with Gumbel-Softmax}

During training, the policy must sample nodes from the frontier while preserving log-probabilities for policy-gradient optimization. Given logits $\ell_t(u)$ for $u\in U_t$, we sample a subset of $c_t$ nodes without replacement. In the implementation, this is approximated with a Gumbel-Softmax or Gumbel-top-$k$ sampling procedure:
\[
\tilde{\ell}_t(u)
=
\frac{\ell_t(u)+g_u}{\tau},
\qquad
g_u\sim \operatorname{Gumbel}(0,1),
\]
where $\tau$ is the temperature. Nodes already selected in $V_t$ are masked out before sampling. The accumulated log-probability of sampled nodes is stored and used in the RL objective. At test time, we remove stochasticity and choose the top-$c_t$ nodes by the policy logits.

\subsection{Auxiliary Supervised Ranking Loss}

When supervised training is enabled, we also apply a pairwise ranking loss to the final node scores. Given a positive node $x_{\mathrm{pos}}$ and a negative node $x_{\mathrm{neg}}$ in the retrieved set, the BPR loss is
\[
\mathcal{L}_{\mathrm{sup}}
=
-
\mathbb{E}
\left[
\log
\sigma
\left(
f_{\theta}(x_{\mathrm{pos}})
-
f_{\theta}(x_{\mathrm{neg}})
\right)
\right].
\]
We also consider a margin variant:
\[
\mathcal{L}_{\mathrm{sup}}
=
\mathbb{E}
\left[
\operatorname{softplus}
\left(
\gamma -
\left(
f_{\theta}(x_{\mathrm{pos}})
-
f_{\theta}(x_{\mathrm{neg}})
\right)
\right)
\right],
\]
where $\gamma=0.5$ in our implementation.

The final loss is
\[
\mathcal{L}
=
\mathcal{L}_{\mathrm{RL}}
+
\lambda_{\mathrm{sup}}
\mathcal{L}_{\mathrm{sup}}.
\]

\subsection{Final Node Scores}

The GNN produces two-class logits for each node in the final selected set. We convert these logits into a scalar retrieval score by taking the logit difference:
\[
s_{\theta}(q,v)
=
\operatorname{logit}_{\theta}(v,1)
-
\operatorname{logit}_{\theta}(v,0).
\]
Nodes are ranked by this scalar score. This scoring rule is used only after the expansion phase; intermediate expansion uses the policy logits over frontier nodes.

\subsection{Optimization Details}

We optimize the model using AdamW with cosine learning-rate scheduling and warmup. Gradients are clipped by global norm to prevent instability caused by high-variance policy-gradient updates and extreme Gumbel-Softmax log-probabilities. During training, the model samples expansion trajectories; during evaluation, it uses deterministic top-$k$ expansion at every hop.

\section{Baseline Methods}
\label{app:baselines}

We compare \methodname{} against dense retrieval, heuristic graph expansion, and LLM-based agentic graph retrieval baselines. Most KG-RAG baselines and their reported results are taken from \citet{stark} and \citet{graphflow}, which together evaluate a wide range of models for the STARK benchmark.

\paragraph{Dense Retriever.}
Dense retrieval ranks each node independently by embedding similarity to the query. Following the standard bi-encoder retrieval paradigm~\citep{karpukhin2020dense}, the query and node text are encoded into dense vectors and nodes are retrieved by cosine similarity or inner product. 

For MiniLM-based experiments, we use \texttt{sentence-transformers/}\allowbreak\texttt{all-MiniLM-L6-v2} as a lightweight dense text encoder. This model belongs to the Sentence Transformers \texttt{all-*} family, which is trained on more than one billion training pairs and designed as a general-purpose sentence embedding model.

In the STARK-PRIME encoder comparison, we also evaluate OpenAI \texttt{text-embedding-ada-002} and Qwen3-Embedding-4B~\citep{zhang2025qwen3}. This comparison tests how well one-shot semantic similarity can retrieve answer nodes without using graph expansion.

\paragraph{G-Retriever.}
Following G-Retriever~\citep{gretriever}, which formulates graph retrieval as Prize-Collecting Steiner Tree selection~\citep{archer2011improved} over a textual graph, we implement only its non-finetuned retrieval stage. Following \citet{graphflow}, for each query we retrieve dense seed nodes using the existing node embeddings, construct a bounded 2-hop ego graph, assign node prizes from node-query similarity and edge prizes from head, relation, and tail similarity, and solve the PCST objective using \texttt{pcst\_fast} \citep{hegde2014fast,hegde2015nearly,goemans1995general}. The selected subgraph nodes are ranked by their embedding relevance plus prize bonuses; no graph prompt, generation stage, or soft-prompt/LLM fine-tuning is used.

\paragraph{SubgraphRAG.}
Following SubgraphRAG~\citep{li2025simple}, which retrieves query-relevant KG subgraphs using a triple-scoring mechanism before LLM reasoning, we implement a retrieval-only variant of its subgraph selection stage. Starting from dense embedding seed nodes, we build a bounded local 2-hop subgraph and score each triple by a weighted combination of head-node, relation-type, and tail-node similarity to the query. The endpoints of the highest-scoring triples, together with the original seeds, are returned as the retrieved nodes. This replaces SubgraphRAG's learned MLP triple scorer and excludes the downstream LLM reasoning component.

\paragraph{Beam Traversal.}
Motivated by BeamQA~\citep{atif2023beamqa}, which uses beam search to execute multi-hop knowledge graph query-answering paths, we implement a non-learned beam-search traversal baseline. The method starts from the top 3 dense seed nodes and expands the KG hop by hop for 2 hops, scoring each transition by destination-node similarity, relation-type similarity, previous path score, and a small hop penalty. At each hop, it keeps only the best frontier nodes under a fixed beam width and finally ranks all reached nodes by their best traversal score.

\paragraph{A*-Style Search.}
Inspired by A*Net~\citep{zhu2023astarnet}, which learns an A*-like priority function for scalable path-based KG reasoning, we implement a non-learned priority-queue traversal baseline. Starting from the top 3 dense seeds, each frontier state is prioritized by its accumulated path score plus a heuristic based on node-query embedding similarity. The traversal expands only the highest-priority states up to a fixed budget and depth, scoring transitions with the same node, relation, path, and hop-penalty terms as beam traversal. Unlike A*Net, the priority function is fixed and no edge or node selector is trained.

\paragraph{Personalized PageRank.}
Following personalized PageRank for query-biased graph importance~\citep{jeh2003scaling}, we implement an embedding-seeded local PPR retriever. The restart distribution is placed on the top 3 dense seed nodes, weighted by their query similarity, and PPR is run on a bounded 2-hop ego graph with restart probability $0.15$ for up to 20 iterations. Final relevance is a weighted combination of normalized PPR score and dense node-query similarity, and candidate nodes are ranked by this combined score.

\paragraph{PPR + MMR Reranker.}
Following Maximal Marginal Relevance~\citep{carbonell1998mmr}, we add a diversity-aware reranker on top of the PPR candidate pool. We first form a pool of the top 250 graph-expanded candidates using the PPR relevance score, then greedily select 20 nodes by maximizing a linear trade-off between query relevance and dissimilarity to already selected nodes. Relevance is the normalized PPR/dense score, redundancy is cosine similarity between node embeddings, and we use MMR weight $\lambda = 0.78$.

\paragraph{STARK VSS and Multi-VSS, with and without reranking.}
We include the vector-similarity baselines from STARK~\citep{stark}. VSS is a dense retriever that uses the OpenAI \texttt{text-embedding-ada-002} text embedding model. Multi-VSS represents each candidate node with multiple chunks or vectors and aggregates the query--chunk similarities. These baselines measure the effect of stronger vector indexing over text-rich semi-structured knowledge bases.

We also compare with STARK's LLM reranking baselines. These methods first retrieve candidates using VSS and then rerank the top candidates using a stronger language model, such as Claude2 or GPT-4. Since reranking operates only on the initially retrieved candidates, these methods can improve rank-sensitive metrics but cannot recover answers missed by the first-stage retriever.

\paragraph{ToG.}
Think-on-Graph (ToG) is an LLM-agent-based graph traversal method for KG reasoning~\citep{ToG}. It uses an LLM to iteratively select graph neighbors and search for evidence relevant to the query. \citet{graphflow} adapt ToG for retrieval evaluation by returning retrieved node documents at each search step rather than directly generating an answer. Because running ToG over the full STARK graph is infeasible, the search is initialized from dense-retrieval seed nodes and constrained to the 2-hop neighborhood around the seed. The baseline is instantiated with both LLaMA3-8B-Instruct~\citep{llama3} and GPT-4o.

\paragraph{SFT.}
SFT is a supervised fine-tuning baseline built on top of the ToG-style LLM search agent. The agent is fine-tuned on valid retrieval trajectories so that it learns to imitate graph-search decisions that lead to relevant nodes. In the setup of \citet{graphflow}, SFT uses LLaMA3-8B-Instruct as the backbone and applies LoRA fine-tuning~\citep{lora} for efficiency.

\paragraph{PRM.}
The Process Reward Model (PRM) baseline follows the step-wise supervision paradigm of \citet{lightman2023let}. In contrast to standard supervised fine-tuning, which imitates complete successful trajectories, PRM learns a reward model that scores intermediate state-action pairs during graph search. Given a retrieval state $s_t$ and a candidate action $a_t$, the model predicts a scalar process reward $r_\theta(s_t,a_t)$, which is used to guide the agent toward more promising next-hop decisions. In the GraphFlow setup~\citep{graphflow}, PRM is implemented with LLaMA3-8B-Instruct as the backbone and trained using curated intermediate preferences over retrieval steps.

\paragraph{GraphFlow.}
GraphFlow is the main method proposed by \citet{graphflow}. It formulates KG retrieval as a multi-step decision process and trains an LLM-based retrieval policy with a GFlowNet-style objective~\citep{gflownet}. Instead of relying on explicit process-level labels, GraphFlow learns a flow estimator that factorizes terminal outcome rewards over intermediate retrieval states. This model also uses \citet{lora} finetuning of LLaMA3-8B-Instruct \citep{llama3}.

\paragraph{Reranked agent baselines.}
For LLM-agent baselines, \citet{graphflow} also report variants with reranking. In these settings, the agent first retrieves a set of candidate nodes, and then the retrieved results are reranked by the LLM. Reranking LLM and the retrieval one are the same model.

\section{Cross-Dataset K-Hop Analysis}
\label{app:cross_dataset_khop}

\subsection{K-Hop Expansion}
\Cref{fig:cross_dataset_khop_expansion} extends the $k$-hop expansion analysis from \cref{fig:multi-hop-expansion-results} to all three STARK knowledge graphs. The same pattern appears across datasets: even when retrieval starts from only a few seed nodes, uniform neighborhood expansion grows rapidly and approaches a large fraction of the graph within a small number of hops. This growth is especially problematic for STARK-AMAZON and STARK-MAG, whose knowledge graphs contain over one million nodes. In these settings, a naive multi-hop expansion can quickly stop being a targeted retrieval step and instead become close to whole-graph enumeration.

\begin{figure}[H]
    \centering
    \includegraphics[width=\linewidth]{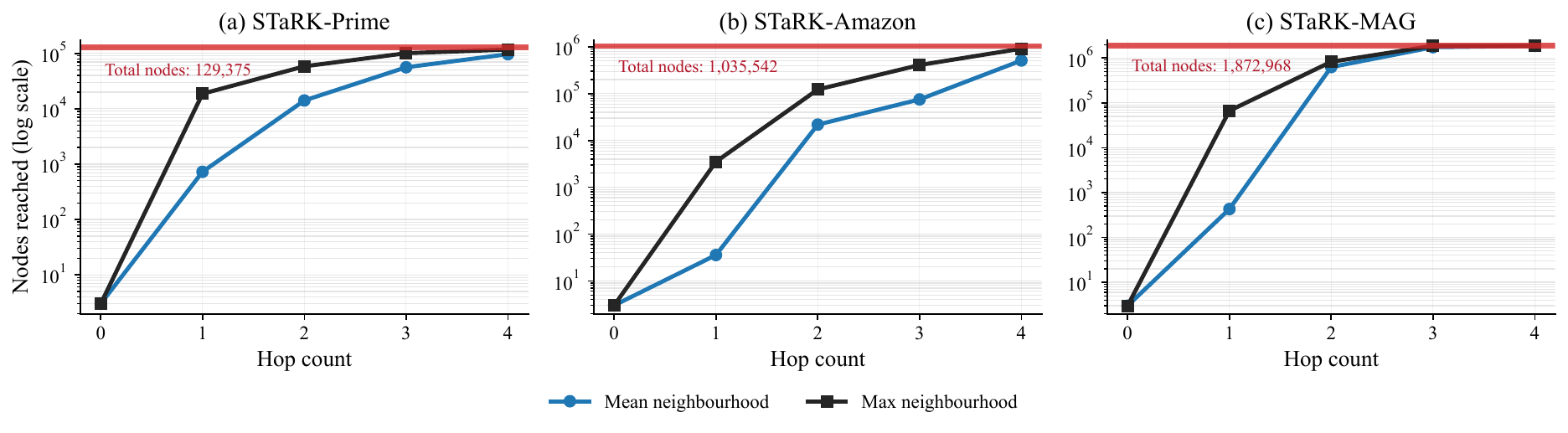}
    \caption{\textbf{Cross-dataset growth of naive $k$-hop expansion.}
    The number of nodes reached from dense seed nodes grows by orders of magnitude as the hop count increases. Blue curves show the mean neighborhood size and black curves show the maximum neighborhood size; the red horizontal line marks the total number of nodes in each knowledge graph. Across STARK-PRIME, STARK-AMAZON, and STARK-MAG, uniform expansion rapidly approaches graph-scale candidate sets, illustrating why first-stage retrieval needs selective, budgeted expansion rather than exhaustive $k$-hop neighborhoods.}
    \label{fig:cross_dataset_khop_expansion}
\end{figure}

The max-neighborhood curves show the worst-case behavior: for some seed nodes, one or two hops already expose tens of thousands to nearly all reachable nodes, depending on the dataset. The mean curves are lower but still grow by orders of magnitude on the log scale. STARK-MAG is particularly dense under this view, with the mean 2-hop expansion already reaching a very large portion of the graph. These trends motivate the bounded and learned expansion strategy used by \methodname{}: rather than expanding all neighbors at each hop, the retriever must decide which frontier nodes are worth following under a strict budget.

\subsection{Budgeted K-Hop Filtering Across Metrics}
\label{app:khop_filtering_metric_grid}

\paragraph{Setup.}
We further evaluate \textsc{K-hop-with-filtering} as a non-learned graph expansion baseline under different retrieved-node budgets. For each query, we first select dense seed nodes by cosine similarity and then run the frontier-filtered expansion procedure described in the main text. We compare the resulting retrieved sets against pure dense retrieval on all three STARK datasets, sweeping the final retrieval budget and reporting Hit@Any, Recall@Any, and MRR.

\begin{figure}[h]
    \centering
    \includegraphics[width=\linewidth]{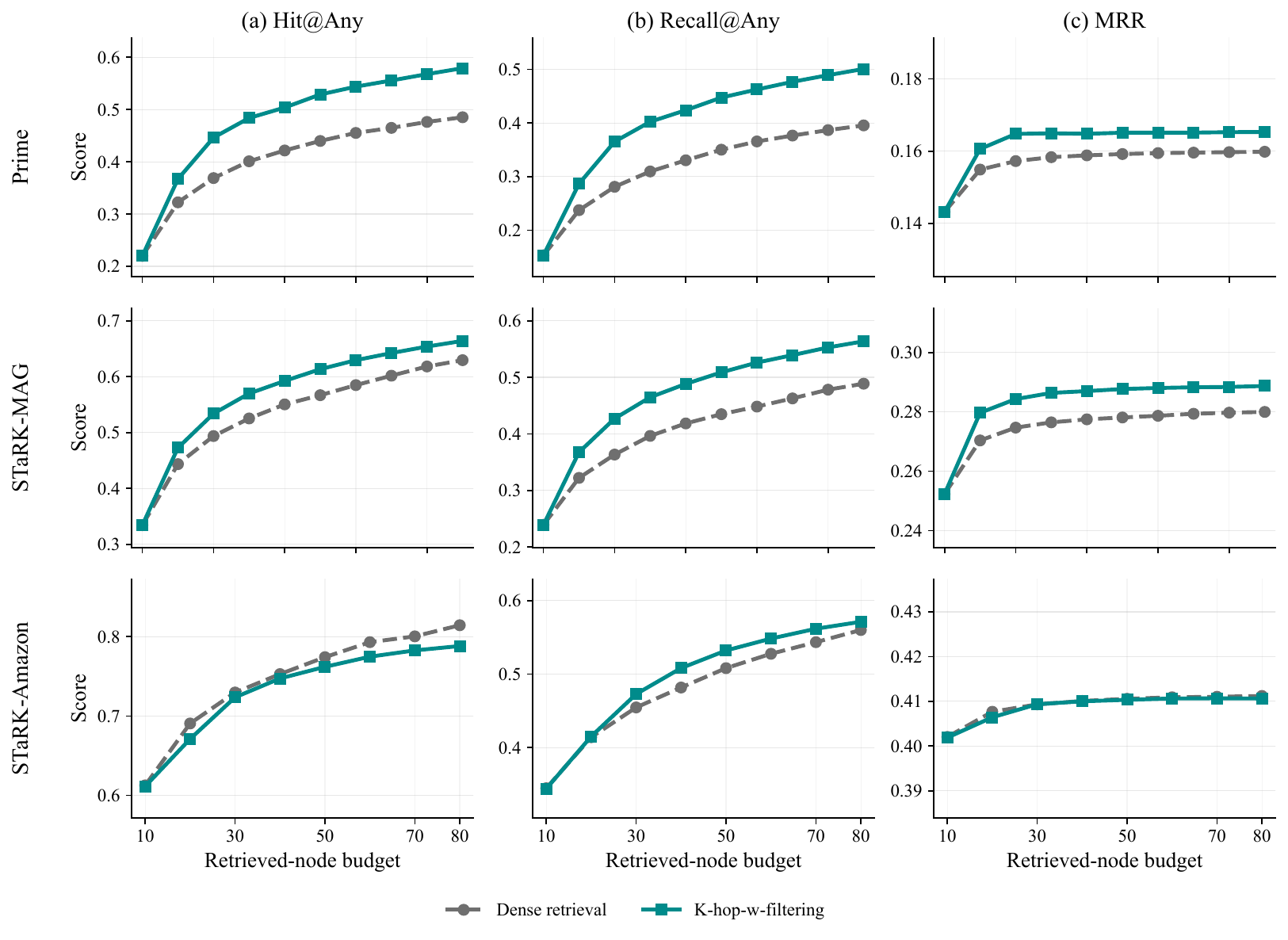}
    \caption{\textbf{Dense retrieval versus budgeted \textsc{K-hop-with-filtering} across datasets and metrics.}
    Each row corresponds to a STARK dataset and each column to a retrieval metric. The curves compare pure dense retrieval with the same initial dense seeds followed by budgeted frontier expansion. Filtering over local graph neighborhoods improves coverage-oriented metrics most clearly on STARK-PRIME and STARK-MAG, while gains on STARK-AMAZON are smaller and more metric-dependent.}
    \label{fig:dense_vs_khop_filtering_metric_grid}
\end{figure}

\paragraph{Analysis.}
\Cref{fig:dense_vs_khop_filtering_metric_grid} shows that budgeted graph expansion is most beneficial when answer nodes are reachable through useful local structure but are not ranked highly by dense similarity alone. On STARK-PRIME and STARK-MAG, \textsc{K-hop-with-filtering} consistently improves Hit@Any and Recall@Any across budgets, with the largest gains at small and medium budgets where the candidate set is still compact. MRR improves only modestly, suggesting that the main benefit of filtering is candidate discovery rather than fine-grained ranking. On STARK-AMAZON, dense retrieval is already strong and often matches or exceeds filtered expansion on Hit@Any, while Recall@Any remains competitive for the filtered method, this limits the amount graph discovery algorithms can improve over the initial dense retrieval, and this also justifies limited improvements in \cref{tab:main_results} with the graph search algorithms over the dense retriever. This pattern supports the role of \textsc{K-hop-with-filtering} as a useful but limited graph-aware baseline: it can recover additional relevant nodes from local neighborhoods, yet its greedy similarity-based frontier decisions do not reliably optimize ranking quality.

\section{Training Stability}
\label{app:training_stability}

We evaluate the training stability of the RL retriever on STARK-PRIME using the
MiniLM-L6-v2 language-model encoder. The experiment is repeated across ten
random seeds. To avoid repeated model selection on the original test split, we
hold out 20\% of the original training set as validation data and use the
original validation split as the held-out test set. Across training, we monitor
the total loss, supervised BPR loss, RL loss, training reward, validation
Recall@20, and test Recall@20.

\Cref{fig:rl_stability} summarizes the resulting learning curves. The dominant
trend is stable optimization across seeds: the supervised BPR loss decreases
smoothly, the training reward increases, and validation/test Recall@20 improve
early before entering a plateau. The RL-loss curve is less directly interpretable
because the policy-gradient term uses the mean reward of the retrieved set as a
baseline, so it is not expected to converge to a fixed value. Overall, the figure indicates
that all seeds reach a useful and stable regime, although some seed-level
variation remains.

\begin{figure}[t]
    \centering
    \includegraphics[width=\linewidth]{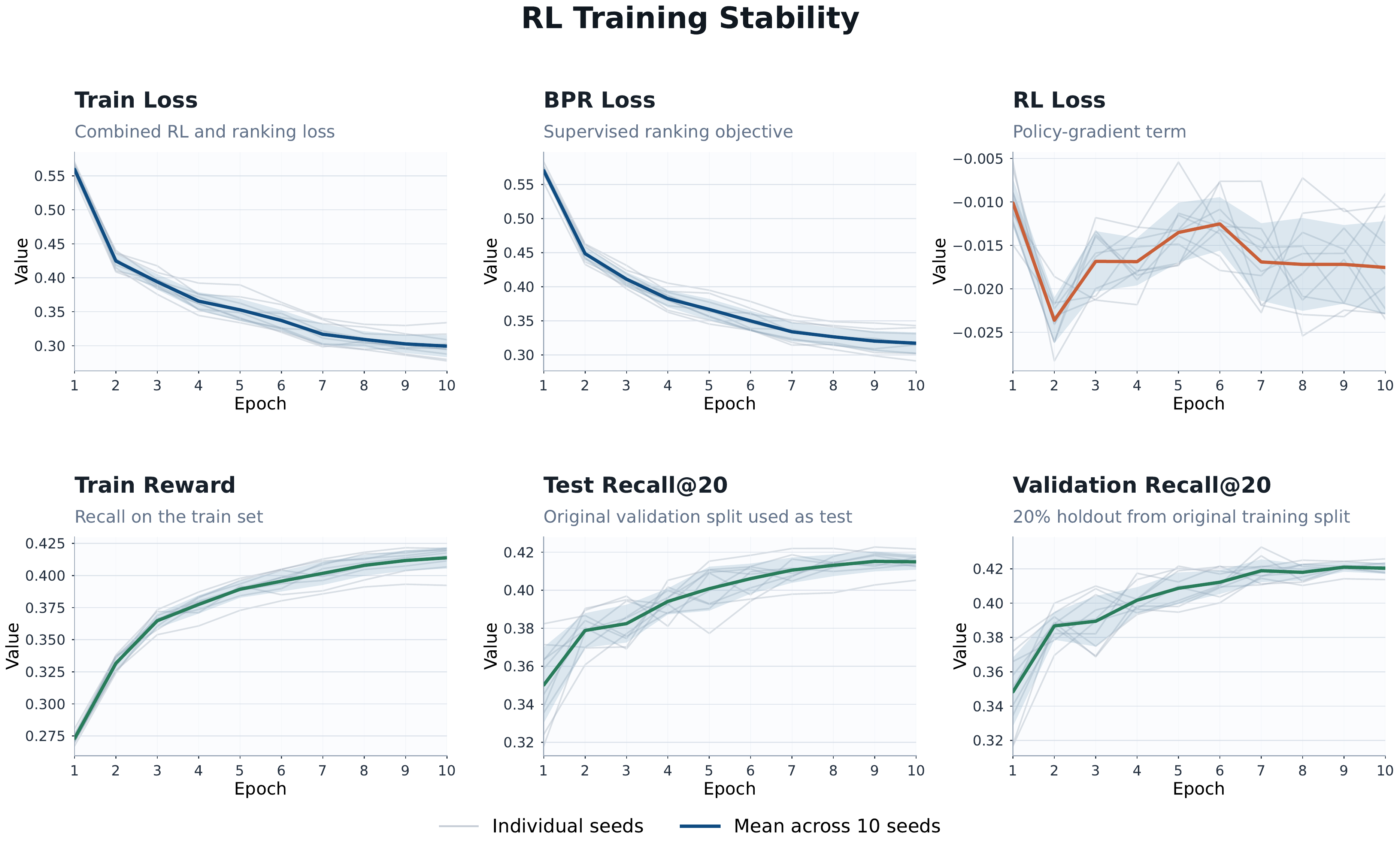}
    \caption{\textbf{RL training stability across random seeds on STARK-PRIME.}
    Curves show training loss, supervised BPR loss, RL loss, training reward,
    test Recall@20, and validation Recall@20 across ten random seeds. The BPR
    loss decreases consistently, and reward and recall improve early.}
    \label{fig:rl_stability}
\end{figure}

The validation and test trajectories in \cref{fig:rl_stability} are closely
aligned, suggesting that the validation split provides a reliable model-selection signal. We quantify
this relationship in \cref{tab:val_test_correlation}. For each metric, we pair
the validation and test scores at every checkpoint across all ten seeds and ten
training epochs, creating 100 paired observations. We then compute Pearson
correlation to measure linear agreement and Spearman's $\rho$ and Kendall's
$\tau$ to measure whether validation scores preserve the ranking of checkpoints
on the held-out test split.

\Cref{tab:val_test_correlation} shows strong validation-test agreement across
all retrieval metrics. Recall@20, the main model-selection metric, has high
linear and rank correlations, indicating that checkpoints with stronger
validation recall generally also perform better on the held-out split. The same
pattern holds for Hits@1, Hits@5, and MRR, with MRR showing the strongest
agreement. Together with the learning curves in \cref{fig:rl_stability}, these
results support a measured stability claim: training is not completely
variance-free, especially in the RL component, but validation performance is a
reliable proxy for held-out retrieval quality and all runs reach a stable
high-performing region.

\begin{table*}[t]
\centering
\caption{\textbf{Validation--test correlation across training checkpoints.}
All correlations are computed across ten random seeds and ten training epochs
($n=100$ paired validation/test observations).}
\label{tab:val_test_correlation}
\begin{tabular}{>{\raggedright\arraybackslash}p{3.1cm}ccc}
\toprule
\rowcolor{headerblue}
\textbf{Metric} & \textbf{Pearson $r$} & \textbf{Spearman $\rho$} & \textbf{Kendall $\tau$} \\
\midrule
H@1
& 0.966 & 0.962 & 0.846 \\
H@5
& 0.971 & 0.968 & 0.860 \\
MRR
& 0.982 & 0.978 & 0.879 \\
\rowcolor{lightgrayrow}
\textbf{R@20}
& 0.967 & 0.933 & 0.785 \\
\bottomrule
\end{tabular}
\end{table*}

\section{Hyperparameters and Infrastructure}

All experiments were developed and tested on a MacBook Pro with an Apple M3 Pro CPU. Final training and evaluation runs were performed on single-GPU servers equipped with either NVIDIA A100 or NVIDIA H100 GPUs. Each experiment is runnable on a single GPU and does not require distributed training or model parallelism. This includes the dense-retrieval baselines, \textsc{K-hop-with-filtering}, and the full \methodname{} training pipeline. The bounded local-subgraph construction used by \methodname{} keeps memory requirements manageable by avoiding full-graph GNN computation during training. The final hyperparameters used for the results in \cref{tab:main_results} are reported in \cref{tab:training_hyperparameters}. Preprocessing the embeddings might take up to a day. However, after preprocessing, each experiment can take one to three hours to run. 

\paragraph{Hyperparameter Search.}
We performed hyperparameter sweeps over the main architectural and optimization parameters. Specifically, we searched hidden dimensions in $\{16, 32, 64\}$, learning rates in $\{10^{-2}, 10^{-3}, 3\times 10^{-4}, 10^{-4}\}$, number of GNN layers in $\{2, 3, 4, 6\}$, and dropout rates in $\{0.1, 0.3, 0.5\}$. We selected the final configuration based on recall for the validation dataset and then used the same configuration for the corresponding test-set evaluation.

\begin{table*}[t]
\centering
\caption{\textbf{Training hyperparameters for the \methodname{} models in \cref{tab:main_results}.}
We report the main architectural, retrieval, RL, and optimization hyperparameters used for each dataset.}
\label{tab:training_hyperparameters}
\small
\setlength{\tabcolsep}{5.5pt}
\renewcommand{\arraystretch}{1.13}
\begin{adjustbox}{width=\textwidth}
\begin{tabular}{>{\raggedright\arraybackslash}p{4.1cm}ccc}
\toprule
\rowcolor{headerblue}
\textbf{Hyperparameter} 
& \textbf{STARK-PRIME} 
& \textbf{STARK-MAG} 
& \textbf{STARK-AMAZON} \\
\midrule

\rowcolor{subheaderblue}
\multicolumn{4}{l}{\textbf{Text encoder and seed retrieval}} \\
Initial seed size $k_0$ 
& 3 & 3 & 10 \\
PCA enabled / dimension 
& Yes / 256 & Yes / 256 & Yes / 256 \\

\midrule
\rowcolor{subheaderblue}
\multicolumn{4}{l}{\textbf{GNN architecture}} \\
Hidden dimension 
& 16 & 32 & 64 \\
Number of GNN layers 
& 3 & 4 & 3 \\
Dropout 
& 0.1 & 0.1 & 0.1 \\

\midrule
\rowcolor{subheaderblue}
\multicolumn{4}{l}{\textbf{Expansion policy}} \\
Expansion sizes $\{c_t\}$ 
& $[7,10]$ & $[7,7,8]$ & $[4,4,4]$ \\
Frontier candidate caps 
& $[20,50]$ & $[20,30,50]$ & $[20,30,50]$ \\

\midrule
\rowcolor{subheaderblue}
\multicolumn{4}{l}{\textbf{Optimization}} \\
Batch size 
& 16 & 16 & 16 \\
Learning rate 
& $1\times 10^{-3}$ & $1\times 10^{-3}$ & $3\times 10^{-4}$ \\
Weight decay 
& $1\times 10^{-5}$ & $1\times 10^{-5}$ & $1\times 10^{-5}$ \\
Epochs 
& 15 & 15 & 10 \\
Warmup epochs 
& 3 & 3 & 3 \\
Max gradient norm 
& 1.0 & 1.0 & 1.0 \\

\midrule
\rowcolor{subheaderblue}
\multicolumn{4}{l}{\textbf{Auxiliary supervised ranking}} \\
Use supervised loss 
& Yes & Yes & Yes \\
Supervised loss 
& BPR & BPR & BPR \\
Supervised weight 
& 1.0 & 1.0 & 1.0 \\

\bottomrule
\end{tabular}
\end{adjustbox}
\end{table*}

\end{document}